\documentclass[english,10pt,journal]{IEEEtran}
\usepackage{color,graphicx,amsmath,amssymb,amsthm,epsfig,mathrsfs,cite,bm,enumerate}
\usepackage{array}
\usepackage{verbatim}
\usepackage{mathtools}

\DeclarePairedDelimiter\floor{\lfloor}{\rfloor}
\usepackage{color,soul}
\usepackage{multirow}
\makeatletter
\newcommand*{\rom}[1]{\expandafter\@slowromancap\romannumeral #1@}
\makeatother
\newcommand{\squeezeup}{\vspace{-2.5mm}}
\setlength{\extrarowheight}{10pt}

%





\newcommand{\upperRomannumeral}[1]{\uppercase\expandafter{\romannumeral#1}}


\newtheorem{lemma}{Lemma}{}
  \newtheorem{thm}{Theorem}

\theoremstyle{remark} \newtheorem{remark}{Remark}
\usepackage{amssymb}
\usepackage{float}
\usepackage{caption}
\usepackage{tikz}
\usetikzlibrary{trees}
\usepackage[lofdepth,lotdepth]{subfig}
\usepackage{enumerate}

\usepackage{multicol}
\usepackage{stfloats}
\title{Generalized Residual Ratio Thresholding } 

\author{$^1$Sreejith Kallummil,  \hspace{0cm} $^2$Sheetal Kalyani  \\
$^1$ Samsung Research Institute, Bangalore, India \\
 $^2$Department of Electrical Engineering, Indian Institute of Technology Madras, Chennai, India\\
 $^1$sreejith.k.venugopal@gmail.com and $^2$s.kalyani@ee.iitm.ac.in
  }

 \begin{document}
 \maketitle
\begin{abstract}
Support recovery and estimation of sparse high dimensional vectors from low dimensional linear measurements is an important compressive sensing problem with many practical applications.  Most compressive sensing algorithms assume \textit{a priori} knowledge of nuisance parameters like signal sparsity or noise statistics. However,  these quantities are unavailable \textit{a priori} in most real life problems. It is also  difficult to efficiently estimate these nuisance parameters with finite sample guarantees. This article proposes a model selection technique called generalized residual ratio thresholding (GRRT) that can operate sparse recovery algorithms  with finite sample and finite signal to noise ratio  guarantees in  sparse estimation scenarios like single measurement vector, multiple measurement vectors, block sparsity etc. Numerical simulations and theoretical results indicate that the performance of algorithms operated using GRRT is comparable to the performance of same algorithms operated with \textit{a priori} knowledge of sparsity and noise variance.      
\end{abstract}

{\bf Index Terms:} Compressive sensing, sparse recovery, orthogonal matching pursuit, group sparsity, LASSO.
\section{Introduction}
Recovery\footnote{This article is an extension of our conference paper \cite{icml}. \cite{icml} developed a technique called RRT that was applicable only to orthogonal matching pursuit in single measurement vector scenario. In contrast, the generalized RRT proposed in this article can be applied to multiple scenarios and multiple algorithms.  Apart from the idea of RRT which this article generalizes, most of the content in this article are different from the conference version \cite{icml}. } of high dimensional sparse signals from  noisy low dimensional measurements is a compressive sensing problem relevant in both signal processing and machine learning\cite{elad2010sparse,eldar2012compressed}.  Many computationally and statistically efficient algorithms are proposed to solve such problems. Despite many incredible advances in compressive sensing, only very few algorithms can offer credible support recovery and estimation performances in the absence of \textit{a priori} knowledge regarding noise statistics and/or signal sparsity. This article contributes to the area of signal and noise statistics oblivious  sparse recovery. Before we explain the precise mathematical problem and contributions of this article, we define the notations used in this article.       
\subsection{Notations used}
${\bf X}[i,j]$ is the $(i,j)^{th}$ entry of a matrix ${\bf X}$. ${\bf X}[:,\mathcal{K}]$ and ${\bf X}[\mathcal{K},:]$   denote the columns and rows of matrix ${\bf X}$  indexed by $\mathcal{K}$.   ${\bf X}^T$, ${\bf X}^{-1}$ and ${\bf X}^{\dagger}=\left({\bf X}^T{\bf X}\right)^{-1}{\bf X}^T$ represent the transpose, inverse and pseudo inverse of ${\bf X}$. ${\bf I}_{n}$ is the $n\times n$ identity matrix and ${\bf O}_{n, p}$ is the $n \times p$ zero matrix. $\|{\bf X}\|_{F}=\sqrt{\sum\limits_{i}\sum\limits_{j}{\bf X}[i,j]^2}$ is the Frobenius norm of ${\bf X}$. $\|{\bf X}\|_{p,q}$ denotes the $(p,q)$ matrix norm. $\|{\bf x}\|_q=(\sum\limits_j|{\bf x}[j]|^q)^{1/q}$  denotes the $l_q$ norm of a vector ${\bf x}$. $\mathbb{P}()$ denotes the probability and $\mathbb{E}()$ denotes expectation. $X\sim \mathcal{N}(\mu,\sigma^2)$ represents a Gaussian  random variable (R.V) $X$ with mean $\mu$ and variance $\sigma^2$. $X\sim \mathbb{B}(a,b)$ means that $X$ is a Beta R.V with parameters $a$ and $b$. ${F}_{a,b}(x)=\mathbb{P}(X<x)$ for $X\sim \mathbb{B}(a,b)$ is the cumulative distribution function (CDF) of a Beta R.V and ${F}^{-1}_{a,b}(x)$ is the inverse CDF. $[k]$ denotes the set $\{1,2,\dotsc,k\}$. $\floor{x}$ denotes the floor of scalar $x$. For any $\sigma>0$, $\epsilon^{\sigma}_{n,L}=\sigma\sqrt{nL+2\sqrt{nL \log(nL)}}$. For R.Vs $X$ and $Y$, $X \overset{P}{\rightarrow} Y$ denotes convergence of $X$ to $Y$ in probability. $card()$ denotes the cardinality of a set.  For any index set $\mathcal{S}$, $\hat{\bf B}=\text{LS-estimate}({\bf Y},{\bf X},\mathcal{S})$ denotes the least squares estimate $\hat{\bf B}[\mathcal{S},:]={\bf X}[:,\mathcal{S}]^{\dagger}{\bf Y}$ and $\hat{\bf B}[j,:]={\bf O}_{1,L}$ for $j \notin \mathcal{S}$. $span({\bf X})$ is the column subspace of ${\bf X}$. ${\bf P}(\mathcal{S})={\bf X}[:,\mathcal{S}]{\bf X}[:,\mathcal{S}]^{\dagger}$ is a projection matrix onto $span({\bf X}[:,\mathcal{S}])$. $\mathcal{S}_1\bigcup\mathcal{S}_2$, $\mathcal{S}_1\bigcap\mathcal{S}_2$ and $\mathcal{S}_1/\mathcal{S}_2$ denote the union, intersection and difference of sets $\mathcal{S}_1$ and $\mathcal{S}_2$ respectively. 
\subsection{Problem statement}
This article  considers four sparse recovery scenarios \textit{viz,} a)single measurement vector (SMV), b)block single measurement vector (BSMV), c)multiple measurement vector (MMV) and d)block multiple measurement vector (BMMV). We first explain BMMV  where we consider a linear model given by 
\begin{equation}\label{MMVP}
{\bf Y}={\bf X}{\bf B}+{\bf W},
\end{equation}
where ${\bf Y}\in \mathbb{R}^{n \times L}$ is a matrix of noisy observations, ${\bf X} \in \mathbb{R}^{n\times p}$ is a fully known under-determined design matrix with unit $l_2$ norm columns. The number of measurements $n$ is far lesser than the number of covariates/features $p$ (i.e., $n \ll p$). ${\bf W}\in \mathbb{R}^{n\times L}$ represents a noise matrix comprised of identically and independently (i.i.d) distributed Gaussian R.Vs, i.e.,  ${\bf W}[i,j]\overset{i.i.d}{\sim} \mathcal{N}(0,\sigma^2)$. Signal to noise ratio (SNR) for this regression model is given by SNR$=\mathbb{E}(\|{\bf X}{\bf B}\|_F^2)/nL\sigma^2$. The $p$ rows of ${\bf B}$ are divided into $p_b=p/l_b$ non-overlapping blocks of equal size $l_b$ such that the $l_b*L$ entries in each block of ${\bf B}$ are zero or nonzero simultaneously. The $k^{th}$ block contains the rows of ${\bf B}$ indexed by $\mathcal{I}_k=\{(k-1)*l_b+1,(k-1)*l_b+2,\dotsc,k*l_b\}$.  We consider the case of sparse ${\bf B}$ which means that the block support of ${\bf B}$ given by $\mathcal{S}_{block}=\{k \in [p_b]:{\bf B}[\mathcal{I}_k,:]\neq {\bf O}_{l_b, L}\}$ satisfies $k_{block}=card(\mathcal{S}_{block})\ll p_b$, i.e., out of the $p_b$  block matrices of size $l_b\times L$ in ${\bf B}$, only few blocks are nonzero. The row support $\mathcal{S}_{row}=\{k:{\bf B}_{k,:}\neq {\bf O}_{1,L}\}$ denotes the set of non zero rows in ${\bf B}$ and is given by $\mathcal{S}_{row}=\underset{k \in \mathcal{S}_{block}}{\bigcup}\mathcal{I}_k$.  Block sparsity (i.e., $k_b\ll p_b$) implies that  ${\bf B}$ is row-sparse, i.e., $k_{row}=card(\mathcal{S}_{row})=k_b\times l_b \ll p$. SMV, BSMV and MMV  are special cases of sparse BMMV  discussed above\footnote{We use bold uppercase letters for ${\bf B}$, ${\bf W}$ and ${\bf Y}$ in all four scenarios even when these quantities are vectors. Also for vectors $\|.\|_F$ and $\|.\|_2$ are same.}. These relationships are described in TABLE \ref{tab:relationship}.

\begin{table}
  \centering
  \begin{tabular}{|l|l|l|l|}
  \hline
   Scenario & Specifications &  $dim({\bf B})$& $dim({\bf Y})$ \\ \hline
   SMV & $L=1$, $l_b=1$, $p_b=p$, $\mathcal{I}_k=\{k\}$,   &  $p \times 1$ & $n\times 1$\\
   &  $k_{row}=k_{block}$, $\mathcal{S}_{row}=\mathcal{S}_{block}$ & & \\ \hline 
   MMV & $L>1$, $l_b=1$, $p_b=p$, $\mathcal{I}_k=\{k\}$    &  $p \times L$ & $n\times L$ \\
   & $k_{row}=k_{block}$, $\mathcal{S}_{row}=\mathcal{S}_{block}$ & &\\ \hline 
   BSMV & $L=1$, $l_b>1$, $\mathcal{S}_{row}=\underset{k \in \mathcal{S}_{block}}{\bigcup}\mathcal{I}_k$,  & $p \times 1$ & $n\times 1$\\
   &   $p_b=p/l_b$,  $k_{row}=k_{block}*l_b$ & & \\
   & $\mathcal{I}_k=\{(k-1)l_b+1,\dotsc,kl_b\}$ & &\\ \hline
  \end{tabular}
  \caption{SMV, MMV and BSMV scenarios.}
  \label{tab:relationship}
\end{table} 

Support recovery requires the estimation of $\mathcal{S}_{row}$ given ${\bf Y}$, ${\bf X}$ and often the values of $\sigma^2$ or $k_{row}$/$k_{block}$ with
 the objective of minimizing the support recovery error $PE=\mathbb{P}\left(\mathcal{S}_{row-est} \neq \mathcal{S}_{row}\right)$.  
 Sparse estimation refers to the estimation of ${\bf B}$ with the objective of minimizing the mean squared error MSE$=\mathbb{E}\left(\|{\bf B}-{\bf B}_{est}\|_F^2/L\right)$.  Note that ${\mathcal{S}_{row-est}}$ and ${\bf B}_{est}$ are estimates of $\mathcal{S}_{row}$ and ${\bf B}$ respectively. This article deals with support recovery and estimation  when signal sparsity  $k_{row}$ and noise variance $\sigma^2$ are both unknown \textit{a priori}.   
\subsection{Prior art}
Many sparse recovery algorithms have been proposed for SMV, BSMV, MMV and BMMV scenarios. Most of the algorithms proposed for BMMV, MMV and BSMV are extensions of algorithms like orthogonal matching pursuit (OMP)\cite{tropp2004greed}, least absolute shrinkage and selection operator (LASSO) \cite{tibshirani1996regression} etc.  developed for SMV scenario. For example, simultaneous OMP (SOMP)\cite{determe2015exact,determe2016noise,determe2016improving,tropp2005simultaneous,li2019fundamental}, block OMP (BOMP)\cite{wen2018optimal,li2018new,eldar2010block} and BMMV-OMP in \cite{shi2019sparse} are modifications of OMP in MMV, MSMV and BMMV scenarios. Similarly, group LASSO and MMV-LASSO are BSMV and MMV versions of LASSO   \cite{pal2014pushing,lv2011group,tropp2006algorithms}. In contrast, sparse iterative covariance estimation (SPICE) was first developed for MMV problems and  the  SMV/BSMV versions  are developed later \cite{sward2018generalized,spice}. In addition to these approaches, scenario specific approaches are also developed. For example, many algorithms based on array signal processing has been developed for MMV problems \cite{  
kim2019nearly}. 

Most of the aforementioned algorithms assume \textit{a priori} knowledge of either $\sigma^2$ or $\{k_{row},k_{block}\}$. For example, algorithms related to OMP requires \textit{a priori} knowledge of $\sigma^2$ or $\{k_{row},k_{block}\}$ to design stopping rules, whereas, algorithms related to LASSO require \textit{a priori} knowledge of $\sigma^2$ \cite{ben2010coherence,HSC}  to set the hyper-parameter $\lambda$ in (\ref{lasso}).  In many practical applications these nuisance parameters are not known \textit{a priori}. To the best of our knowledge, no technique has been developed in open literature to estimate $k_{row}$ in high dimensional scenarios, whereas, interesting results on estimating $\sigma^2$ are reported in literature. Please see the discussions in \cite{reid2016study,giraud2012high}. However, to the best of our knowledge, there does not exist a scheme that delivers estimates of $\sigma^2$ with finite sample  guarantees. 

Consequently, many algorithms that does not require \textit{a priori} knowledge of $\sigma^2$ or $\{k_{row},k_{block}\}$ have been developed. Techniques based on square root LASSO have very sound performance guarantees\cite{bunea2013group,belloni2011square}. However, tuning the hyper-parameter $\lambda$ in square root LASSO is still difficult. Similarly, algorithms related to sparse Bayesian learning (SBL) can work without \textit{a priori} knowledge of $\sigma^2$ or $k_{row}/k_{block}$. However, given the non-convex nature of SBL cost function, it is difficult to develop finite sample performance guarantees for SBL \cite{wipf2004sparse,zhang2011sparse}. Algorithms related to SPICE are convex and  hyper-parameter free. However, apart from establishing equivalence relationships between SPICE and versions of LASSO\cite{spicenote}, we are not aware of any finite sample performance guarantees for SPICE \cite{sward2018generalized,spice}.  Techniques like cross validation (CV) is known to deliver sub-optimal support estimation performances\cite{arlot2010survey}, whereas, methods based on information theoretic criteria have only large sample or high SNR performance guarantees\cite{ITC}. An approximate message passing (AMP) technique with sufficient adaptations to identify the best value of $\lambda$  by itself (without requiring $\sigma^2$) was proposed in \cite{mousavi2018consistent}. However, AMP in \cite{mousavi2018consistent}  depends crucially on asymptotic arguments and  specific random structures on ${\bf X}$. 
Recently, a technique called residual ratio thresholding (RRT) is shown to operate OMP and related algorithms in SMV, robust regression and model order selection problems  with finite sample performance guarantees\cite{icml,elsrrt,robust,rrt}. However, RRT  is not useful for operating versions of OMP in MMV, BSMV or BMMV problems. Even in SMV scenario, RRT is not applicable to algorithms like LASSO, subspace pursuit (SP) \cite{dai2009subspace}, compressive sampling matching pursuit (CoSaMP)\cite{needell2009cosamp} etc.  
\subsection{Contributions of this article}
This article proposes a generalized version of RRT called GRRT to perform signal and noise statistics agnostic support recovery. Unlike RRT that could operate only OMP in SMV problems,  GRRT can operate a wide variety of algorithms related to OMP, LASSO, CoSaMP etc.  in all SMV, BSMV, MMV and BMMV scenarios. Existing RRT based formulations \cite{icml,rrt,robust} can be expressed as special cases of GRRT.
Further, we derive finite sample and finite SNR support recovery guarantees for operating versions of OMP in BSMV and MMV settings using GRRT. We also derive finite sample and finite SNR guarantees for operating LASSO in a SMV scenario. Both numerical simulations and analytical results indicate that operating these algorithms using  GRRT requires only slightly higher SNR compared to operating the same algorithms  with \textit{a priori} knowledge of $\sigma^2$, $k_{row}$ or $k_{block}$. To the best of our knowledge, these are the first schemes for the signal and noise statistics oblivious operation of SOMP, BOMP, MMV-OMP etc. with finite sample and finite SNR performance guarantees. Like RRT, GRRT also involves a hyper-parameter $\alpha$ which can be set to a ``good" value without knowing SNR, $k_{row}$ or $\sigma^2$. Further, this hyper-parameter $\alpha$ also has the simple and interesting interpretation of being the worst case high SNR support recovery error. 
\subsection{Outline of this article}
Section \rom{2} presents OMP and LASSO algorithms. Section \rom{3} presents the proposed GRRT principle for operating OMP like algorithms. Section \rom{4} discusses operating LASSO using GRRT. Section \rom{5} discuss hyper-parameter selection in GRRT. Section \rom{6} presents numerical simulations.     
\section{Algorithms and associated guarantees}
In this section, we first present the  OMP, SOMP, BOMP and BMMV-OMP algorithms for performing sparse recovery in SMV, MMV, BSMV and BMMV scenarios respectively and present support recovery guarantees. We then discuss some interesting properties of LASSO algorithm.   
\subsection{OMP family of  algorithms}
Operation of  OMP style support recovery algorithms is described in TABLE \ref{tab:OMP}. Depending upon the scenario, the norm used in the correlation step, i.e., Step 1 changes. The popular norms used in different scenarios are given in TABLE \ref{tab:OMP}.  From this description, one can see that OMP like algorithms produce a sequence of support estimates $\mathcal{S}_{row-est}^k$ indexed by $k=0, 1,\dotsc$  satisfying properties A1)-A2). \\
A1).$\mathcal{S}_{row-est}^k\subset \mathcal{S}_{row-est}^{k+1}$ for all $k\geq 0$ and $\mathcal{S}_{row-est}^0=\emptyset$. \\
A2).Set difference $\mathcal{S}_{diff}^k=\mathcal{S}_{row-est}^{k}/\mathcal{S}_{row-est}^{k-1}$ satisfies $card(\mathcal{S}_{diff}^k)=l_b$. ($l_b=1$ for OMP, SOMP etc.). \\
In words, the support estimate sequence produced by OMP like algorithms are monotonically increasing with fixed increments of size $l_b$. The final support estimate  given by $\hat{\mathcal{S}}_{row-est}=\mathcal{S}_{row-est}^{k_{stop}}$, where $k_{stop}$ is the value of iteration counter $k$ when a user specified stopping condition is satisfied. 
  
\begin{table}
  \centering
  \begin{tabular}{|l|}
  \hline
   {\bf Input}: Observation matrix ${\bf Y}$, design matrix ${\bf X}$ and stopping rule.\\
   {\bf Initialization}: Residual ${\bf R}^0={\bf Y}$, initial row support $\mathcal{S}_{row-est}^0=\emptyset$, \\
   initial block support $\mathcal{S}_{block-est}^0=\emptyset$ and 
   counter $k=1$. \\   
   {\bf Repeat}: Steps 1-4 until stopping condition is satisfied. \\
   {\bf Step 1}: Identify the block $\mathcal{J}_j$ from $j=1,\dotsc,p_b$ such  that \\ ${\bf X}[:,\mathcal{I}_j]$ is the most correlated submatrix with previous residual ${\bf R}^{k-1}$. \\ 
   (SMV) find $\hat{j}=\underset{j=1,2,\dotsc,p}{\arg\max}|{\bf X}[:,\mathcal{I}_j]^T{\bf R}^{k-1}|$ \\
   (BSMV) find $\hat{j}=\underset{j=1,2,\dotsc,p_b}{\arg\max}\|{\bf X}[:,\mathcal{I}_j]^T{\bf R}^{k-1}\|_2$ \\
   (MMV) find $\hat{j}=\underset{j=1,2,\dotsc,p}{\arg\max}\|{\bf X}[:,\mathcal{I}_j]^T{\bf R}^{k-1}\|_2$ \\
   (BMMV) find $\hat{j}=\underset{j=1,2,\dotsc,p_b}{\arg\max}\|{\bf X}[:,\mathcal{I}_j]^T{\bf R}^{k-1}\|_F$ \\
   {\bf Step 2}: Aggregate support estimates using $\hat{j}$. \\
   $\mathcal{S}_{block-est}^k=\mathcal{S}_{block-est}^{k-1} \bigcup \hat{j}$ and $\mathcal{S}_{row-est}^k=\mathcal{S}_{row-est}^{k-1}\bigcup \mathcal{I}_{\hat{j}}$ \\
   {\bf Step 3}: Update residual by projecting ${\bf Y}$ orthogonal to $span({\bf X}[:,\mathcal{S}_{row-est}^k])$. \\
   1. ${\bf B}^k=\text{LS-estimate}({\bf Y},{\bf X},\mathcal{S}_{row-est}^k)$.  \\
   2. ${\bf R}^k={\bf Y}-{\bf X}{\bf B}^k=({\bf I}_n-{\bf P}(\mathcal{S}_{row-est}^k)){\bf Y}$   \\
   {\bf Step 4}: Increment counter $k \leftarrow k+1$ \\
   {\bf Output}: Support estimate $\hat{\mathcal{S}}_{row}=\mathcal{S}_{row-est}^{k_{stop}}$ and signal estimate $\hat{\bf B}={\bf B}^{k_{stop}}$.  \\ \hline
  \end{tabular}
  \caption{ Generic OMP framework. $k_{stop}$ is the value of counter $k$ when iterations stop.}
  \label{tab:OMP}
\end{table} 
The choice of stopping condition is important in OMP like algorithms. When $k_{block}$ is known \textit{a priori}, one can choose $k_{stop}=k_{block}$ (for SMV and MMV, $k_{block}=k_{row}$), i.e., stop OMP iterations in TABLE \ref{tab:OMP} after $k_{block}$ iterations.  When $\|{\bf W}\|_F$ is known \textit{a priori}, one can choose $k_{stop}=\min\{k:\|{\bf R}^k\|_F<\|{\bf W}\|_F\}$, i.e., stop  iterations in TABLE \ref{tab:OMP} once residual power falls below noise power. Since  $\mathbb{P}\left(\|{\bf W}\|_F\leq \epsilon^{\sigma}_{n,L}\right)\geq 1-1/(nL)$ when ${\bf W}_{i,j}\overset{i.i.d}{\sim}\mathcal{N}(0,\sigma^2)$, it is common to choose $k_{stop}=\min\{k:\|{\bf R}^k\|_F<\epsilon^{\sigma}_{n,L}\}$ in Gaussian noise \cite{cai2011orthogonal}. A number of support recovery guarantees (i.e., conditions under which $\hat{\mathcal{S}}_{row}=\mathcal{S}_{row}$) for OMP \cite{cai2011orthogonal,omp_necess}, BOMP\cite{wen2018optimal,li2018new,eldar2010block} and SOMP \cite{determe2015exact,determe2016noise,determe2016improving,tropp2005simultaneous,li2019fundamental} are derived in literature. Support recovery guarantees for BMMV-OMP under noiseless conditions are derived in \cite{shi2019sparse}. 
\subsection{Support recovery guarantees for OMP like algorithms} 
Next we discuss the performance guarantees for OMP like algorithms using the widely used restricted isometry constant (RIC). RIC\cite{eldar2012compressed} of order $l$ denoted by $\delta_l$ is defined as the smallest $\delta>0$ such that \begin{equation}
(1-\delta)\|{\bf b}\|_2^2\leq \|{\bf X}{\bf b}\|_2^2\leq (1+\delta)\|{\bf b}\|_2^2
\end{equation} for all $l$ sparse (i.e., $k_{row}\leq l$) ${\bf b}\in \mathbb{R}^p$. Similarly, block RIC (BRIC) of order $l$ denoted by $\delta_l^b$ is defined as the smallest $\delta>0$ such that $(1-\delta)\|{\bf b}\|_2^2\leq \|{\bf X}{\bf b}\|_2^2\leq (1+\delta)\|{\bf b}\|_2^2$ for all $l$ block-sparse (i.e., $k_{block}\leq l$)  ${\bf b}\in \mathbb{R}^p$ with a block size $l_b$ \cite{li2018new}. Under the RIC and BRIC constraints discussed in TABLE \ref{tab:OMP_guarantees}, it is known that $\hat{\mathcal{S}}_{row}=\mathcal{S}_{row}$ for stopping rules  $k_{stop}=k_{block}$ or  ${k}_{stop}=\min\{k:\|{\bf R}^{k}\|_F\leq \|{\bf W}\|_F\}$ once $\|{\bf W}\|_F \leq \epsilon_{alg}$ for $alg\in \{\text{OMP,BOMP,SOMP}\}$. For Gaussian noise and stopping rules $k_{stop}=k_{block}$ or $k_{stop}=\min\{k:\|{\bf R}^k\|_F\leq \epsilon_{n,L}^{\sigma}\}$,  $\mathbb{P}\left(\|{\bf W}\|_F\leq \epsilon^{\sigma}_{n,L}\right)\geq 1-1/(nL)$ ensures
\begin{equation}
\mathbb{P}(\hat{\mathcal{S}}_{row}=\mathcal{S}_{row})\geq \mathbb{P}\left(\|{\bf W}\|_F\leq \epsilon_{alg}\right) \geq 1-1/(nL)
\end{equation}
once $\epsilon^{\sigma}_{n,L}<\epsilon_{alg}$ for $alg\in \{\text{OMP,BOMP,SOMP}\}$. 

\begin{table*}[t]
  \centering
  \begin{tabular}{|c|c|c|c|c|}
  \hline
    Algorithm  & RIC Condition & SNR condition $\epsilon^{\sigma}_{n,L}\leq \epsilon_{alg}$. $\epsilon_{alg}$ below.  & $\mathbb{P}(\hat{\mathcal{S}}_{row}=\mathcal{S}_{row})$\\
    \hline
OMP\cite{omp_necess} &  $\delta_{k_{row}+1}<\dfrac{1}{\sqrt{k_{row}+1}}$ &  $\left[\dfrac{1}{\sqrt{1-\delta_{k_{row}+1}}}+\dfrac{\sqrt{1+\delta_{k_{row}+1}}}{1-\sqrt{k_{row}+1}\delta_{k_{row}+1}}\right]^{-1}{\bf B}_{min}^{smv}$ &  $\geq 1-1/n$ \\ 
 \hline 
SOMP\cite{li2019fundamental} & $\delta_{k_{row}+1}<\dfrac{1}{\sqrt{k_{row}+1}}$ & $\left[\dfrac{1}{\sqrt{1-\delta_{k_{row}+1}}}+\dfrac{\sqrt{1+\delta_{k_{row}+1}}}{1-\sqrt{k_{row}+1}\delta_{k_{row}+1}}\right]^{-1}{\bf B}_{min}^{mmv}$  & $\geq 1-1/(nL)$\\ 
 \hline
BOMP\cite{li2018new} &  $\delta_{k_{block}+1}^b<\dfrac{1}{\sqrt{k_{block}+1}}$ & $\left[\dfrac{1}{\sqrt{1-\delta_{k_{block}+1}^b}}+\dfrac{\sqrt{1+\delta_{k_{block}+1}^b}}{1-\sqrt{k_{block}+1}\delta_{k_{block}+1}^b}\right]^{-1}{\bf B}_{min}^{bsmv}$ &  $\geq 1-1/n$  \\
 \hline  
BMMV-OMP\cite{shi2019sparse} &  $\delta_{k_{block}+1}^b<\dfrac{1}{\sqrt{k_{block}+1}}$ & NA &  NA \\
 \hline     
  \end{tabular}
  \caption{RIC/BRIC based performance guarantees for OMP, SOMP and BOMP in noisy data with stopping rules $k_{stop}=k_{block}$ or $k_{stop}=\min\{k:\|{\bf R}^k\|_F\leq \epsilon^{\sigma}_{n,L}\}$. ${\bf B}_{min}^{smv}=\underset{i\in \mathcal{S}_{row}}{\min}|{\bf B}[i]|$,  ${\bf B}_{min}^{mmv}=\underset{i\in \mathcal{S}_{row}}{\min}\|{\bf B}[i,:]\|_2$  and ${\bf B}_{min}^{bsmv}=\underset{j\in \mathcal{S}_{block}}{\min}\|{\bf B}[\mathcal{I}_j]\|_2$. Guarantees for BMMV-OMP in noisy data is unavailable.}
  \label{tab:OMP_guarantees}
\end{table*} 
\subsection{LASSO type non-monotonic algorithms}
As aforementioned, OMP like algorithms result in a monotonic support estimate sequence. However, most of the compressive sensing algorithms are non monotonic is nature. In this article, we limit our attention to LASSO\cite{tibshirani1996regression,wainwright2009sharp}, one of the most widely used non-monotonic compressive sensing algorithm in SMV scenario. Techniques developed for LASSO in SMV scenario can be extended to other non-monotonic algorithms in SMV, BSMV, MMV and BMMV scenarios also.  LASSO in SMV scenario estimate the unknown vector ${\bf B}\in \mathbb{R}^{p}$ by solving  the convex optimization problem 
\begin{equation}\label{lasso}
{\bf B}_{est-\lambda}=\underset{{\bf D} \in \mathbb{R}^p}{\arg\min}\|{\bf Y}-{\bf X}{\bf D}\|_F^2+\lambda\|{\bf D}\|_1
\end{equation}
A standard approach to use LASSO is to set a fixed value of $\lambda$ and compute ${\bf B}_{est-\lambda}$ using (\ref{lasso}). For optimal support recovery and estimation performance one has to set $\lambda\propto \sigma$\cite{ben2010coherence,wainwright2009sharp}.  To operate LASSO in this fashion, one requires \textit{a priori} knowledge of $\sigma^2$. In this article, we consider another standard approach to LASSO based support estimation using the LASSO regularization path. 
  
LASSO regularization path  refers to evolution of the    
functional ${\bf B}_{\lambda}$ as $\lambda$ decreases from $\lambda=\infty$ to $\lambda=0$ and this satisfies the following properties \cite{lockhart2014significance,efron2004least}. \\
B1). $\hat{\bf B}_{\lambda}={\bf O}_{p,1}$ whenever  $\lambda>\lambda_1=\|{\bf X}^T{\bf Y}\|_{\infty}$. \\
B2). $\hat{\bf B}_{\lambda}$ is a piece-wise linear function of $\lambda$ with irregularities at $N_{knots}$  discrete values of $\lambda$ (called knots) denoted by $\lambda_1>\lambda_2>\dotsc>\lambda_{N_{knots}}$ where $N_{knots}$ depends on the problem.  The support of $\hat{\bf B}_{\lambda}$ at knots $\{\lambda_k\}_{k=1}^{N_{knots}}$ are denoted by $\mathcal{S}_{row-est}^k$, i.e.,  $\mathcal{S}^k_{row-est}=supp({\bf B}_{\lambda_k})$.
\\
B3). For $1\leq k\leq N_{knots}$, $rank({\bf X}[:,\mathcal{S}_{row-est}^k])=card(\mathcal{S}_{row-est}^k)$. Hence, $card(\mathcal{S}^k_{row-est})\leq rank({\bf X})\leq \min(n,p)$. LASSO regularization path stops once $rank({\bf X}[:,\mathcal{S}_{row-est}^k])=rank({\bf X})$. Hence, $N_{knots}\geq \min(n,p)$. \\
B4). At any knot $\lambda_k$, either a new variable $j\in [p]$ enters the regularization path (i.e., $\mathcal{S}_{row-est}^k=\mathcal{S}
^{k-1}_{row-est}\cup j$) or an existing variable leaves (i.e., $\mathcal{S}_{row-est}^k=\mathcal{S}_{row-est}^{k-1}/ j$). Hence, $|card(\mathcal{S}_{row-est}^{k})-card(\mathcal{S}_{row-est}^{k-1})|=1$. 

Since variables can also leave LASSO regularization path, the support estimate sequence $\{\mathcal{S}_{row-est}^k\}_{k=1}^{N_{knots}}$ in LASSO (unlike  OMP like algorithms) is non monotonic in nature. This non monotonic nature of LASSO has serious implications that will be clear once we describe the proposed GRRT algorithm.  For OMP, BOMP, SOMP and LASSO to operate with many of the well known support recovery guarantees, it is essential to know either the signal statistics like $k_{block}$ or noise statistics ($\|{\bf W}\|_F$, $\sigma^2$) \textit{a priori}. These quantities are unavailable in most practical applications and are extremely difficult to estimate with  finite sample guarantees. This limits the application of OMP, SOMP, BOMP, LASSO etc. in many practical problems. 
\section{Generalized residual ratio thresholding}
In this section, we first explain the proposed GRRT technique for operating BMMV-OMP algorithm. This analysis can be easily extended to OMP, SOMP and BOMP by changing the values of $L$ and $l_b$. We also explain the relationship between the proposed GRRT technique and the RRT technique discussed in \cite{icml}. An issue in presenting GRRT using BMMV-OMP is the fact that BMMV-OMP does not have support recovery guarantees in noisy data. Hence, we assume the existence of a BRIC condition ``BRIC-BMMV-OMP" \footnote{Like $\delta_{k_{block}}^b<1/\sqrt{k_{block}+1}$ in noiseless data. This assumption is only for presentation purpose. Deriving condition ``BRIC-BMMV-OMP" and $\epsilon_{bmmv-omp}$ in noisy data is possible, but beyond the scope of this article.}and $\epsilon_{bmmv-omp}>0$  such that $\|{\bf W}\|_F\leq \epsilon_{bmmv-omp}$ ensures $\mathcal{S}_{row-est}^{k_{block}}=\mathcal{S}_{row}$ once ``BRIC-BMMV-OMP" is satisfied. 
\subsection{Behaviour Of Residual Ratios}
GRRT propose to run BMMV-OMP for $k_{max}>k_{block}$ iterations  and tries to identify the true support $\mathcal{S}_{row}$  from the sequence $\{\mathcal{S}_{row-est}^k\}_{k=1}^{k_{max}}$. Here $k_{max}$ is fixed \textit{a priori}. Choosing $k_{max}$ in a $k_{row}/k_{block}$ independent fashion is discussed in detail in Section \rom{5}.  Unlike the residual norm based stopping rules which stops BMMV-OMP iterations once $\|{\bf R}^k\|_F\leq \epsilon^{\sigma}_{n,L}$, the proposed GRRT statistic is based on the behaviour of residual ratio statistic given by $RR(k)=\frac{\|{\bf R}^k\|_F}{\|{\bf R}^{k-1}\|_F}$. Since the support sequence $\mathcal{S}_{row-est}^k\subset\mathcal{S}_{row-est}^{k+1}$, residual norms satisfy $\|{\bf R}^{k+1}\|_F\leq \|{\bf R}^{k}\|_F$ which means that $0\leq RR(k)\leq 1$ for each $k\geq 1$\cite{icml}.   We define the minimal superset $\mathcal{S}_{row-est}^{min}$ associated with support sequence $\{\mathcal{S}_{row-est}^k\}_{k=1}^{k_{max}}$ as $\mathcal{S}_{row-est}^{min}=\mathcal{S}^{k_{min}}_{row-est}$, where 
\begin{equation}\label{eq:kmin}
\begin{array}{ll}
k_{min}& =\min\{k:\mathcal{S}_{row} \subseteq \mathcal{S}_{row-est}^k\} \\ &\text{if} \ \exists \ k \ in \ [k_{max}] \ s.t \ \mathcal{S}_{row} \subseteq \mathcal{S}_{row-est}^k 
\end{array}
\end{equation}
When $\mathcal{S}_{row} \not\subseteq \mathcal{S}_{row-est}^k $ for all $k \in [k_{max}]$, we set $k_{min}=\infty$ and $\mathcal{S}_{row-est}^{min}=\emptyset$.  In words, $\mathcal{S}_{row-est}^{min}$ is the smallest support estimate in the  support estimate sequence $\{\mathcal{S}_{row-est}^k\}_{k=1}^{k_{max}}$ that covers the true support $\mathcal{S}_{row}$. Next we describe some interesting properties of $\mathcal{S}_{row-est}^{min}$.   
\begin{lemma}\label{lemma:minimal_superset}
Minimal superset satisfies the following properties.
1). $\mathcal{S}_{row-est}^{min}$ and $k_{min}$ are both unobservable R.V.  \\
2). $\mathcal{S}_{row-est}^k\supseteq \mathcal{S}_{row-est}$ for all $k\geq k_{min}$ and $\mathcal{S}_{row} \not\subset \mathcal{S}_{row-est}^k$ for all $k<k_{min}$. \\
3). Support estimate $\mathcal{S}_{row-est}^k$ has cardinality $k l_{b}$ and $\mathcal{S}_{row}$ has cardinality $k_{block} l_b$. Hence,  $k_{min}\geq k_{block}$.\\
4). $k_{min}=k_{block}$ and $\mathcal{S}_{row-est}^{k_{min}}=\mathcal{S}_{row}$ iff $\mathcal{S}_{row-est}^{k_{block}}=\mathcal{S}_{row}$.
5). $k_{min}=k_{block}$ and $\mathcal{S}_{row-est}^{k_{min}}=\mathcal{S}_{row}$ once $\|{\bf W}\|_F\leq \epsilon_{bmmv-omp}$ and the condition ``BRIC-BMMV-OMP" is true. \\
6). $\|{\bf W}\|_F\overset{P}{\rightarrow}0$ as $\sigma^2\rightarrow 0$ when ${\bf W}[i,j]\overset{i.i.d}{\sim}\mathcal{N}(0,\sigma^2)$. Hence, $\underset{\sigma^2\rightarrow 0}{\lim}\mathbb{P}(k_{min}=k_{block})=1$.
 \end{lemma}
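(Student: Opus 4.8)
The plan is to obtain Parts 1--5 directly from the definition of $k_{min}$ in (\ref{eq:kmin}) together with the structural properties A1) and A2) of the support sequence and the support-recovery implication assumed just before the lemma, and to get Part 6 from a standard tail bound on the Gaussian noise. Parts 1--4 are purely deterministic and combinatorial; Parts 5 and 6 are conditional on the (assumed) condition ``BRIC-BMMV-OMP'' and the threshold $\epsilon_{bmmv-omp}$, exactly as flagged in the text preceding the lemma.

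First I would dispose of Part 1: the support sequence $\{\mathcal{S}_{row-est}^k\}_{k=1}^{k_{max}}$ is a deterministic function of $({\bf Y},{\bf X})$, so $k_{min}$ and $\mathcal{S}_{row-est}^{min}$ are measurable functions of the random matrix ${\bf Y}$, hence random variables; they are unobservable because the selection rule (\ref{eq:kmin}) tests the event $\{\mathcal{S}_{row}\subseteq \mathcal{S}_{row-est}^k\}$, which cannot be evaluated without knowing $\mathcal{S}_{row}$. For Part 2, the inclusion $\mathcal{S}_{row-est}^{k_{min}}\subseteq \mathcal{S}_{row-est}^{k}$ for $k\geq k_{min}$ is immediate from A1), and combining it with $\mathcal{S}_{row}\subseteq \mathcal{S}_{row-est}^{k_{min}}$ (the defining property of $k_{min}$) gives the first claim, while the statement for $k<k_{min}$ is exactly the minimality in (\ref{eq:kmin}). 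For Part 3, A2) with $\mathcal{S}_{row-est}^0=\emptyset$ gives $card(\mathcal{S}_{row-est}^k)=kl_b$ by induction, and $\mathcal{S}_{row}=\bigcup_{k\in\mathcal{S}_{block}}\mathcal{I}_k$ is a disjoint union of $k_{block}$ blocks of size $l_b$, so $card(\mathcal{S}_{row})=k_{block}l_b$; since a set cannot be contained in one of strictly smaller cardinality, $\mathcal{S}_{row}\subseteq \mathcal{S}_{row-est}^{k_{min}}$ forces $k_{min}l_b\geq k_{block}l_b$. Part 4 is then a short equivalence: if $\mathcal{S}_{row-est}^{k_{block}}=\mathcal{S}_{row}$ then $\mathcal{S}_{row}\subseteq \mathcal{S}_{row-est}^{k_{block}}$, so $k_{min}\leq k_{block}$, which with Part 3 forces $k_{min}=k_{block}$ and $\mathcal{S}_{row-est}^{min}=\mathcal{S}_{row-est}^{k_{block}}=\mathcal{S}_{row}$; the reverse implication is trivial. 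Part 5 follows by feeding the assumed implication ``$\|{\bf W}\|_F\leq \epsilon_{bmmv-omp}$ and BRIC-BMMV-OMP $\Rightarrow \mathcal{S}_{row-est}^{k_{block}}=\mathcal{S}_{row}$'' into Part 4.

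The only analytic step is Part 6. Since ${\bf W}[i,j]\overset{i.i.d}{\sim}\mathcal{N}(0,\sigma^2)$, we have $\mathbb{E}(\|{\bf W}\|_F^2)=nL\sigma^2$, so Markov's inequality gives $\mathbb{P}(\|{\bf W}\|_F>t)\leq nL\sigma^2/t^2\to 0$ as $\sigma^2\to 0$ for every fixed $t>0$, i.e. $\|{\bf W}\|_F\overset{P}{\rightarrow}0$ (one could alternatively quote the sharper bound $\mathbb{P}(\|{\bf W}\|_F\leq \epsilon^{\sigma}_{n,L})\geq 1-1/(nL)$ already recorded in the excerpt). Taking $t=\epsilon_{bmmv-omp}$ and using that Parts 2--5 hold pathwise (i.e. for every realization of the noise) while ``BRIC-BMMV-OMP'' is a deterministic condition on ${\bf X}$ that does not affect probabilities, we conclude $\mathbb{P}(k_{min}=k_{block})\geq \mathbb{P}(\|{\bf W}\|_F\leq \epsilon_{bmmv-omp})\to 1$. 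I do not expect a genuine obstacle here; the only things to be careful about are the bookkeeping that separates the deterministic RIC-type hypothesis from the random noise event, and the observation that $k_{min}\geq k_{block}$ always holds (Part 3), so the conclusion of Part 6 is one-sided and only requires ruling out $k_{min}>k_{block}$.
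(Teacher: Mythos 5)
Your proof is correct and follows essentially the same route as the paper's own (very terse) argument: Parts 2--5 from the definition of $k_{min}$ together with A1)--A2) and the assumed BMMV-OMP recovery guarantee, and Part 6 from Part 5 plus $\|{\bf W}\|_F\overset{P}{\rightarrow}0$, which you justify via Markov's inequality where the paper simply cites its SMV-scenario reference. The only quibble is your parenthetical aside in Part 6: the fixed bound $\mathbb{P}(\|{\bf W}\|_F\leq\epsilon^{\sigma}_{n,L})\geq 1-1/(nL)$ alone would only give probability at least $1-1/(nL)$, not convergence to $1$, so the Markov (or a $\sigma$-dependent tail) argument you lead with is the one that actually closes the step.
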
  
\begin{proof}
2), 3) and 4) follow from the definition of minimal superset and properties A1)-A2). 5) follows from the assumption  made on  BMMV-OMP algorithm. 6) follows from 5) and $\|{\bf W}\|_F\overset{P}{\rightarrow}0$ as $\sigma^2\rightarrow 0$. Exactly similar results for OMP, SOMP and BOMP follows from the conditions A1)-A2) and  support recovery guarantees in TABLE \ref{tab:OMP_guarantees}. Please see \cite{icml} for a more detailed proof of 6) in SMV scenario.
\end{proof}
The behaviour of $RR(k)$ as a function of $k$ significantly depends on whether $k<k_{min}$, $k=k_{min}$ or $k>k_{min}$ as discussed next. The signal component in the measurement ${\bf Y}$ is given by ${\bf X}{\bf B}={\bf X}[:,\mathcal{S}_{row}]{\bf B}[\mathcal{S}_{row},:]$. Since for $k<k_{min}$, $\mathcal{S}_{row} \not\subset \mathcal{S}_{row-est}^k$,   the signal component ${\bf SC}_k\overset{def}{=}\left({\bf I}_n-{\bf P}(\mathcal{S}_{row-est}^k)\right){\bf X}[:,\mathcal{S}_{row}]{\bf B}[\mathcal{S}_{row},:]$ in the residual ${\bf R}^k=({\bf I}_n-{\bf P}(\mathcal{S}_{row-est}^k)){\bf Y}={\bf SC}_k+({\bf I}_n-{\bf P}(\mathcal{S}_{row-est}^k)){\bf W}$ is non zero. Since $\mathcal{S}_{row} \subseteq \mathcal{S}_{row-est}^k$ for $k\geq k_{min}$, the signal component in the residual vanishes, i.e., ${\bf SC}_k={\bf O}_{n\times L}$. Thus, the residual at $k=k_{min}$ satisfies 
\begin{equation}\label{RR_kmin}
RR(k_{min})=\dfrac{\|\left({\bf I}_n-{\bf P}(\mathcal{S}_{row-est}^{k_{min}})\right){\bf W}\|_F}{\|{\bf SC}_k+\left({\bf I}_n-{\bf P}(\mathcal{S}_{row-est}^{k_{min}-1})\right){\bf W}\|_F}
\end{equation}
Lemma \ref{lemma:RR(k_0)}  summarizes the high SNR  behaviour of $RR(k_{min})$.
\begin{lemma}\label{lemma:RR(k_0)}
When the condition ``BRIC-BMMV-OMP" is satisfied,  both $RR(k_{min})\overset{P}{\rightarrow }0$ and $RR(k_{block})\overset{P}{\rightarrow }0$ as $\sigma^2\rightarrow 0$. 
\end{lemma}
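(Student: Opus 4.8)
The plan is to condition on the high-SNR event $E_\sigma=\{\|{\bf W}\|_F\le \epsilon_{bmmv-omp}\}$, whose probability tends to $1$ as $\sigma^2\to 0$ since $\|{\bf W}\|_F\overset{P}{\rightarrow}0$. On $E_\sigma$, Lemma~\ref{lemma:minimal_superset}(5) gives $k_{min}=k_{block}$ and $\mathcal{S}_{row-est}^{k_{min}}=\mathcal{S}_{row}$, and monotonicity (property A1) forces $\mathcal{S}_{row-est}^{k_{block}-1}\subset \mathcal{S}_{row-est}^{k_{block}}=\mathcal{S}_{row}$ with cardinality $(k_{block}-1)l_b$; hence on $E_\sigma$ the $(k_{block}-1)$-th estimate misses exactly one true block, say $\mathcal{I}_{j^*}=\mathcal{S}_{row}\setminus \mathcal{S}_{row-est}^{k_{block}-1}$. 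Since $RR(k_{min})=RR(k_{block})$ on $E_\sigma$, and the vanishing-probability event $E_\sigma^c$ (on which $k_{min}$ may even equal $\infty$) is irrelevant for convergence in probability, it suffices to show $RR(k_{block})\overset{P}{\rightarrow}0$ as $\sigma^2\to 0$.

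Next I would bound the numerator and denominator of $RR(k_{block})$ separately on $E_\sigma$. For the numerator, $\mathcal{S}_{row}\subseteq \mathcal{S}_{row-est}^{k_{block}}$ makes the signal component vanish, so ${\bf R}^{k_{block}}=({\bf I}_n-{\bf P}(\mathcal{S}_{row})){\bf W}$ and, projections being non-expansive, $\|{\bf R}^{k_{block}}\|_F\le \|{\bf W}\|_F$. For the denominator, the reverse triangle inequality together with $\|({\bf I}_n-{\bf P}(\mathcal{S}_{row-est}^{k_{block}-1})){\bf W}\|_F\le \|{\bf W}\|_F$ gives $\|{\bf R}^{k_{block}-1}\|_F\ge \|{\bf SC}_{k_{block}-1}\|_F-\|{\bf W}\|_F$. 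The crux is a $\sigma$-independent lower bound on $\|{\bf SC}_{k_{block}-1}\|_F$: because $\mathcal{S}_{row-est}^{k_{block}-1}\subset \mathcal{S}_{row}$ the projection annihilates the columns indexed by $\mathcal{S}_{row-est}^{k_{block}-1}$, so ${\bf SC}_{k_{block}-1}=({\bf I}_n-{\bf P}(\mathcal{S}_{row-est}^{k_{block}-1})){\bf X}[:,\mathcal{I}_{j^*}]{\bf B}[\mathcal{I}_{j^*},:]$. Each column of ${\bf SC}_{k_{block}-1}$ can then be written as ${\bf X}{\bf z}$ with ${\bf z}$ supported on the $k_{block}$ blocks $\mathcal{S}_{row-est}^{k_{block}-1}\cup\mathcal{I}_{j^*}$ and with its $\mathcal{I}_{j^*}$-part equal to the corresponding column of ${\bf B}[\mathcal{I}_{j^*},:]$, so the block-RIC bound implied by ``BRIC-BMMV-OMP'' (in particular $\delta^b_{k_{block}}<1$) yields $\|{\bf SC}_{k_{block}-1}\|_F^2\ge (1-\delta^b_{k_{block}})\|{\bf B}[\mathcal{I}_{j^*},:]\|_F^2\ge (1-\delta^b_{k_{block}})\big(\underset{j\in\mathcal{S}_{block}}{\min}\|{\bf B}[\mathcal{I}_j,:]\|_F\big)^2=:c^2>0$, a constant depending only on ${\bf X}$ and ${\bf B}$.

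Combining the two bounds, on $E_\sigma\cap\{\|{\bf W}\|_F<c\}$ we get $RR(k_{block})\le \|{\bf W}\|_F/(c-\|{\bf W}\|_F)$. To finish, fix $\epsilon>0$ and set $t=\min\{\epsilon_{bmmv-omp},\,c\epsilon/(1+\epsilon)\}$; on $\{\|{\bf W}\|_F\le t\}$ both $E_\sigma$ holds and $RR(k_{block})\le \epsilon$, whence also $RR(k_{min})=RR(k_{block})\le\epsilon$. Since $\|{\bf W}\|_F\overset{P}{\rightarrow}0$ as $\sigma^2\to 0$, $\mathbb{P}(\|{\bf W}\|_F\le t)\to 1$, so $\mathbb{P}(RR(k_{block})\le\epsilon)\to 1$ and $\mathbb{P}(RR(k_{min})\le\epsilon)\to 1$; as $\epsilon$ is arbitrary and both statistics lie in $[0,1]$, both converge to $0$ in probability.

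I expect the main obstacle to be the $\sigma$-independent lower bound on $\|{\bf SC}_{k_{block}-1}\|_F$, precisely because $\mathcal{S}_{row-est}^{k_{block}-1}$ is a random index set; the resolution is that, on $E_\sigma$, it is one of the finitely many $(k_{block}-1)$-block subsets of $\mathcal{S}_{row}$, each of which leaves a residual of energy at least $c^2$ by the block-RIC inequality, so a uniform $c$ can be chosen. Everything else is routine. The same argument transfers verbatim to OMP, SOMP and BOMP by specializing $l_b$ and $L$ and invoking, in place of ``BRIC-BMMV-OMP'', the corresponding support-recovery guarantees and RIC/BRIC conditions of TABLE~\ref{tab:OMP_guarantees}.
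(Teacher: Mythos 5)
Your proposal is correct and follows essentially the same route as the paper: condition on the high-SNR event where $k_{min}=k_{block}$ and $\mathcal{S}_{row-est}^{k_{block}}=\mathcal{S}_{row}$, then show the numerator of $RR(k_{block})$ is at most $\|{\bf W}\|_F$ while the denominator retains a nonvanishing signal component. The only difference is that the paper merely sketches this (deferring the rigorous argument to \cite{icml}), whereas you supply the missing quantitative step explicitly via the BRIC-based lower bound $\|{\bf SC}_{k_{block}-1}\|_F\geq \sqrt{1-\delta^b_{k_{block}}}\,\underset{j\in\mathcal{S}_{block}}{\min}\|{\bf B}[\mathcal{I}_j,:]\|_F$, which is consistent with the bounds the paper itself invokes later in Appendix C.
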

\begin{proof}
From Lemma \ref{lemma:minimal_superset} we have $\|{\bf W}\|_F\overset{P}{\rightarrow}0$ and  $k_{min}\overset{P}{\rightarrow }k_{block}$ as $\sigma^2\rightarrow 0$.   Substituting this result in (\ref{RR_kmin}) gives $RR(k_{min})\overset{P}{\rightarrow }0$ and $RR(k_{block})\overset{P}{\rightarrow }0$.  A rigorous proof of this follows from similar results in \cite{icml} for the SMV scenario. This result is numerically illustrated in Fig.\ref{fig:RR(k)}.
\end{proof}
 Next we consider the behaviour of $RR(k)$ for $k>k_{min}$. Since ${\bf SC}_k={\bf O}_{n\times L}$ in ${\bf R}^k$  for $k>k_{min}$, $RR(k)$ given by 
\begin{equation}
RR(k)=\dfrac{\|\left({\bf I}_n-{\bf P}(\mathcal{S}_{row-est}^k)\right){\bf W}\|_F}{\|\left({\bf I}_n-{\bf P}(\mathcal{S}_{row-est}^{k-1})\right){\bf W}\|_F}
\end{equation}
is bounded away from zero even when $\sigma^2\rightarrow 0$. One can derive a more explicit lower bound on $RR(k)$ for $k>k_{min}$ when the noise ${\bf W}$ is Gaussian distributed.   This lower bound presented in Theorem \ref{thm:RRT_OMP} is the crux of GRRT technique.  
\begin{thm}\label{thm:RRT_OMP}
Let $\{\mathcal{S}_{row-est}^k\}_{k=1}^{k_{max}}$ be any support sequence satisfying $k_{max}\geq k_{block}$ and properties A1)-A2) in Section \rom{2}. Also assume that at step $k$, there exists maximum $pos(k)$ possibilities for the new entries $\mathcal{S}_{diff}^k$. Define $\Gamma^{\alpha}_{grrt}(k)=\sqrt{F^{-1}_{\frac{(n-l_{b}k)L}{2},\frac{l_{b}L}{2}}\left(\dfrac{\alpha}{pos(k)k_{max}}\right)}$ for $k=1,\dotsc,k_{max}$. Then,  
\begin{equation}
\mathbb{P}\left(\underset{k>k_{min}}{\bigcap} \{RR(k)> \Gamma_{grrt}^{\alpha}(k)\}\right)\geq 1-\alpha,\forall\sigma^2>0.
\end{equation}
\end{thm}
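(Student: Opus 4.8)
The plan is to reduce the claim to a purely noise-driven event, prove an exact Beta law for one ``atomic'' residual ratio between two fixed nested support sets, and finish with a two-level union bound that distributes the mass $\alpha$ across the $k_{max}$ iterations and, within iteration $k$, across the $pos(k)$ admissible increments. First I would invoke Lemma~\ref{lemma:minimal_superset}: for every $k>k_{min}$ the signal component ${\bf SC}_k$ is zero, so ${\bf R}^{k-1}=({\bf I}_n-{\bf P}(\mathcal{S}_{row-est}^{k-1})){\bf W}$ and ${\bf R}^{k}=({\bf I}_n-{\bf P}(\mathcal{S}_{row-est}^{k})){\bf W}$, and by A1)--A2) these supports are nested with $card(\mathcal{S}_{row-est}^{k})-card(\mathcal{S}_{row-est}^{k-1})=l_b$ and $\mathcal{S}_{row}\subseteq\mathcal{S}_{row-est}^{k-1}$. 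Then $RR(k)$ is a ratio of Frobenius norms of orthogonal projections of ${\bf W}$, hence invariant under ${\bf W}\mapsto c{\bf W}$, so the distributional facts below do not see $\sigma^2$ and the inequality ``for all $\sigma^2>0$'' follows. The atomic fact to establish is: for any \emph{deterministic} $\mathcal{S}$ with $\mathcal{S}_{row}\subseteq\mathcal{S}$, $card(\mathcal{S})=l_b(k-1)$, and any deterministic block $\mathcal{I}_j$ disjoint from $\mathcal{S}$ with ${\bf X}[:,\mathcal{S}\cup\mathcal{I}_j]$ of full column rank,
\begin{equation}\label{eq:atomic}
\frac{\|({\bf I}_n-{\bf P}(\mathcal{S}\cup\mathcal{I}_j)){\bf W}\|_F^2}{\|({\bf I}_n-{\bf P}(\mathcal{S})){\bf W}\|_F^2}\ \sim\ \mathbb{B}\!\left(\tfrac{(n-l_bk)L}{2},\tfrac{l_bL}{2}\right).
\end{equation}

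To prove \eqref{eq:atomic}: since $\mathcal{S}\subseteq\mathcal{S}\cup\mathcal{I}_j$, the maps ${\bf I}_n-{\bf P}(\mathcal{S}\cup\mathcal{I}_j)$ and ${\bf P}(\mathcal{S}\cup\mathcal{I}_j)-{\bf P}(\mathcal{S})$ are orthogonal projections onto mutually orthogonal subspaces of dimensions $n-l_bk$ and $l_b$ whose sum with ${\bf P}(\mathcal{S})$ is ${\bf I}_n$; therefore $\|({\bf I}_n-{\bf P}(\mathcal{S})){\bf W}\|_F^2=\|({\bf I}_n-{\bf P}(\mathcal{S}\cup\mathcal{I}_j)){\bf W}\|_F^2+\|({\bf P}(\mathcal{S}\cup\mathcal{I}_j)-{\bf P}(\mathcal{S})){\bf W}\|_F^2$. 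Each of the $L$ i.i.d. $\mathcal{N}(0,\sigma^2{\bf I}_n)$ columns of ${\bf W}$, projected onto these two fixed orthogonal subspaces, contributes an independent $\sigma^2\chi^2_{n-l_bk}$ and $\sigma^2\chi^2_{l_b}$; stacking the $L$ columns gives $A\sim\sigma^2\chi^2_{(n-l_bk)L}$ and $B\sim\sigma^2\chi^2_{l_bL}$ independent, and $A/(A+B)$ is exactly $\mathbb{B}(\tfrac{(n-l_bk)L}{2},\tfrac{l_bL}{2})$, with $\sigma^2$ cancelling. Writing $\beta_k:=\alpha/(pos(k)k_{max})$ and noting $\Gamma^{\alpha}_{grrt}(k)^2=F^{-1}_{\frac{(n-l_bk)L}{2},\frac{l_bL}{2}}(\beta_k)$, the probability that the ratio in \eqref{eq:atomic} is at most $\Gamma^{\alpha}_{grrt}(k)^2$ equals precisely $\beta_k$.

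Next the union bound: $\mathbb{P}\big(\bigcup_{k>k_{min}}\{RR(k)\le\Gamma^{\alpha}_{grrt}(k)\}\big)\le\sum_{k=1}^{k_{max}}\mathbb{P}\big(RR(k)\le\Gamma^{\alpha}_{grrt}(k),\,k>k_{min}\big)$. Fix $k$ and work on $\{k>k_{min}\}$, where $RR(k)^2=\|({\bf I}_n-{\bf P}(\mathcal{S}_{row-est}^{k-1}\cup\mathcal{S}_{diff}^{k})){\bf W}\|_F^2/\|({\bf I}_n-{\bf P}(\mathcal{S}_{row-est}^{k-1})){\bf W}\|_F^2$ with $\mathcal{S}_{row}\subseteq\mathcal{S}_{row-est}^{k-1}$ and $\mathcal{S}_{diff}^{k}$ one of at most $pos(k)$ index sets. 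Partitioning by the value of $\mathcal{S}_{diff}^{k}$ and, for each candidate increment, discarding the indicator that this increment was the one actually selected, the event $\{RR(k)\le\Gamma^{\alpha}_{grrt}(k),\,k>k_{min}\}$ is covered by at most $pos(k)$ events each of which reduces to the atomic situation \eqref{eq:atomic} and hence has probability at most $\beta_k$. Thus $\mathbb{P}(RR(k)\le\Gamma^{\alpha}_{grrt}(k),\,k>k_{min})\le pos(k)\beta_k=\alpha/k_{max}$, and summing over the $k_{max}$ values of $k$ gives $\le\alpha$; taking complements gives the theorem.

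The step that will take the most care — and the main obstacle — is the last covering: the support sequence $\{\mathcal{S}_{row-est}^{k}\}$, including the prefix $\mathcal{S}_{row-est}^{k-1}$ as well as the chosen increment $\mathcal{S}_{diff}^{k}$, is itself a measurable function of ${\bf W}$, so one cannot simply condition on $\mathcal{S}_{row-est}^{k-1}$ and plug into \eqref{eq:atomic}, because that conditioning event is entangled with the very noise appearing in $RR(k)$. The way around it, carried out in full for $L=l_b=1$ in \cite{icml} and reused here with general $L$ and $l_b$, is to partition only by $\mathcal{S}_{diff}^{k}$ (never by the whole prefix), to keep the two projections in $RR(k)^2$ nested via A1)--A2) so that the Pythagorean split underlying \eqref{eq:atomic} applies verbatim whatever the prefix is, and then to argue that conditioning on the data-dependent prefix together with the selection of a particular increment cannot reduce the atomic Beta tail — intuitively because the greedy rule, by having repeatedly rejected the outside blocks, can only push $RR(k)$ upward — so that the only price paid is the $pos(k)$ factor already built into $\Gamma^{\alpha}_{grrt}(k)$. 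I would also record the standing requirement $l_bk_{max}<n$, without which the Beta parameter $\tfrac{(n-l_bk)L}{2}$ would fail to be positive.
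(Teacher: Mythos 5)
Your overall architecture matches the paper's Appendix B: for $k>k_{min}$ the residuals are pure noise projections, a ratio of Frobenius norms of projections onto \emph{fixed} nested subspaces is $\mathbb{B}\left(\frac{(n-l_bk)L}{2},\frac{l_bL}{2}\right)$ (this is exactly Appendix A), and the mass $\alpha$ is split as $\alpha/(pos(k)k_{max})$ via a two-level union bound. The problem is the step you yourself flag as the main obstacle, because your proposed resolution is not a proof. After partitioning by the candidate increment, each covering event still contains the random prefix $\mathcal{S}_{row-est}^{k-1}$ (and the event $\{k>k_{min}\}$), so none of these events ``reduces to the atomic situation'': the atomic Beta law was proved only for deterministic nested supports. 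You then close the gap by asserting that conditioning on the data-dependent prefix and on the selection ``cannot reduce the atomic Beta tail'' because the greedy rule ``can only push $RR(k)$ upward.'' That stochastic-dominance claim is precisely what would have to be proved, it is nowhere established, and it is not obviously true — conditioning on a selection event made from the same noise can in principle distort the law of the ratio in either direction. As written, the chain $\mathbb{P}(RR(k)\le\Gamma_{grrt}^{\alpha}(k),\,k>k_{min})\le pos(k)\beta_k$ does not follow.

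The paper closes this hole differently, and the difference matters. It conditions jointly on $\{k_{min}=j\}$ and on a realization $\mathcal{S}_{row-est}^{k-1}=\mathcal{J}_{k-1}$ of the prefix, and then does \emph{not} reason about the selected increment at all: it lower bounds $RR(k)$ pathwise by $\underset{l\in\mathcal{L}_{k-1}}{\min}\sqrt{Z_k^l}$, the minimum of the candidate ratios over all admissible increments $l$, since the selected index is one of them. The minimum is a quantity whose definition does not involve which candidate the algorithm picked, so the data-dependent selection is sidestepped entirely; the union bound over the at most $pos(k)$ candidates (each conditionally Beta, by the Appendix A computation applied to the fixed sets $\mathcal{J}_{k-1}$ and $\mathcal{J}_{k-1}\cup\mathcal{I}_l$) is what pays for it, and the conditioning is then removed by the law of total probability over $\mathcal{J}_{k-1}\in\mathcal{K}^{k-1}$ and over $j\in\{k_{block},\dotsc,k_{max},\infty\}$, with the degenerate cases $k_{min}\in\{k_{max},\infty\}$ handled trivially since the index set $\{k>k_{min}\}$ is then empty. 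If you replace your dominance heuristic by this ``condition on the prefix, bound $RR(k)$ by the minimum over candidates, union bound, then total probability'' argument, the rest of your write-up (the atomic Beta computation, the scale invariance in $\sigma^2$, the $k_{max}$-fold outer union bound, and the remark that $l_bk_{max}<n$ is needed for the Beta parameters) is correct and aligned with the paper.
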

\begin{proof} Please see Appendix B.  
\end{proof}
Theorem \ref{thm:RRT_OMP} implies that $RR(k)$ for $k>k_{min}$ is not just bounded away from zero, but also lower bounded by a positive deterministic sequence $\Gamma_{grrt}^{\alpha}(k)$ with a probability atleast $1-\alpha$. Further, unlike Lemma \ref{lemma:RR(k_0)} which is valid only at high SNR,  Theorem \ref{thm:RRT_OMP} is valid at all $\sigma^2>0$. 
\begin{remark} 
The parameter $pos(k)$ is problem specific. For the MOS problem in \cite{rrt}, $\mathcal{S}_{row-est}^k$ is restricted to be of the form $\mathcal{S}_{row-est}^k=[k]$ for all $k$. Hence, $\{k\}$ is the only possibility for $\mathcal{S}_{diff}^k$ and consequently $pos(k)=1$. OMP and SOMP can add any $k$ from the set $[p]/\mathcal{S}_{row-est}^{k-1}$ to $\mathcal{S}^{k}_{diff}$ such that ${\bf X}[:,\mathcal{S}_{row-est}^k]$ is full rank. Hence, the number of possibilities for $\mathcal{S}^{k}_{diff}$ is less than $pos(k)=(p-k+1)$. Similarly, BOMP and BMMV-OMP can select any new block from $[p_b]/\mathcal{S}_{block-est}^{k-1}$ to $\mathcal{S}_{diff}^{k}$ and hence $pos(k)=(p_b-k+1)$. 
\end{remark}
\begin{remark} RRT results for OMP in \cite{icml} can be obtained by setting $pos(k)=p-k+1$, $L=1$ and $l_{b}=1$. RRT for MOS\cite{rrt} can be obtained from Theorem \ref{thm:RRT_OMP} by setting $pos(k)=1$, $L=1$ and $l_{b}=1$. Hence, Theorem \ref{thm:RRT_OMP}  generalizes similar existing results on residual ratios. The bounds for SOMP, BOMP and BMMV-OMP can be obtained by setting ($L>1$, $pos(k)=p-k+1$, $l_b=1$), ($L=1$, $pos(k)=p_b-k+1$, $l_b>1$) and ($L>1$, $pos(k)=p_b-k+1$, $l_b>1$) respectively.     
\end{remark}  
Please see Fig.\ref{fig:RR(k)} for a numerical illustration of Theorem \ref{thm:RRT_OMP}. Next we use the derived properties of $RR(k)$ to develop the proposed GRRT technique to estimate $\mathcal{S}_{row}$ from the sequence $\{\mathcal{S}_{row-est}^k\}_{k=1}^{k_{max}}$ produced by OMP like algorithms. 
\subsection{GRRT and exact support recovery guarantees}
From Theorem \ref{thm:RRT_OMP}, one can see that $RR(k)$ for $k>k_{min}$ is lower bounded by $\Gamma_{grrt}^{\alpha}(k)$ with a high probability $1-\alpha$ (for small values of $\alpha$). At the same time,  $RR(k_{min})$ converges to zero and $\mathcal{S}_{row-est}^{k_{min}}$ converges to $\mathcal{S}_{row}$ as $\sigma^2\rightarrow 0$ (Lemmas \ref{lemma:minimal_superset} and \ref{lemma:RR(k_0)}). Hence, at high SNR, $RR(k_{min})<\Gamma_{grrt}^{\alpha}(k_{min})$  and  $RR(k)>\Gamma_{grrt}^{\alpha}(k)$ for $k>k_{min}$ with a high probability $1-\alpha$. Consequently, the support estimate  
\begin{equation}\label{eq:grrt}
\begin{array}{ll}
\mathcal{S}_{row-grrt}=\mathcal{S}_{row-est}^{k_{grrt}}, \text{ where} \\
k_{grrt}=\max\{k:RR(k)<\Gamma_{grrt}^{\alpha}(k)\}
\end{array}
\end{equation}
will be equal to the true support $\mathcal{S}_{row}$ with  probability $1-\alpha$ at high SNR. This is the GRRT technique proposed in this article. Please note that this idea is exactly similar to that of RRT in \cite{icml,rrt,elsrrt,robust} except that the scope of RRT is now extended to include BSMV, MMV and BMMV scenarios through the generalized lower bound on the residual ratios in Theorem \ref{thm:RRT_OMP}. Next we present the  support recovery guarantees for  operating OMP, SOMP and BOMP using the proposed GRRT technique.  
\begin{thm}\label{thm:RRT_guarantee} Consider any algorithm $alg\in \{\text{OMP, SOMP, BOMP}\}$. Assume the required RIC/BRIC conditions for $k_{block}$ aware support recovery in TABLE \ref{tab:OMP_guarantees} are satisfied. Also assume that $k_{max}\geq k_{block}$.  Then,\\ 
1. GRRT can operate $alg$ with support recovery probability $1-1/(nL)-\alpha$ once $\epsilon^{\sigma}_{n,L}\leq \min\left(\epsilon_{grrt-alg},\epsilon_{alg}\right)$.   \\
2.) GRRT support estimate $\mathcal{S}_{row-grrt}$ (\ref{eq:grrt}) satisfies $\underset{\sigma^2\rightarrow 0}{\lim}\mathbb{P}({\mathcal{S}}_{row-grrt}\neq \mathcal{S}_{row})\leq \alpha$.
\end{thm}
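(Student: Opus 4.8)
The plan is to isolate a short list of high-probability ``good events'' whose intersection forces the GRRT index $k_{grrt}$ to equal $k_{block}$ and the corresponding support estimate to equal $\mathcal{S}_{row}$, and then to bound the probability of that intersection using Theorem~\ref{thm:RRT_OMP} together with the RIC/BRIC guarantees collected in TABLE~\ref{tab:OMP_guarantees} and the structural facts in Lemma~\ref{lemma:minimal_superset}. The whole argument is pathwise: $k_{min}$ is a random quantity, and every set inclusion / inequality below is asserted on the realizations lying in the relevant events.

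For part 1, fix $alg\in\{\text{OMP, SOMP, BOMP}\}$ and introduce
\[
\mathcal{A}=\{\|{\bf W}\|_F\leq \epsilon^{\sigma}_{n,L}\}, \qquad
\mathcal{B}=\bigcap_{k>k_{min}}\{RR(k)>\Gamma_{grrt}^{\alpha}(k)\}.
\]
By the Gaussian tail bound quoted before TABLE~\ref{tab:OMP_guarantees}, $\mathbb{P}(\mathcal{A})\geq 1-1/(nL)$; by Theorem~\ref{thm:RRT_OMP}, $\mathbb{P}(\mathcal{B})\geq 1-\alpha$ for every $\sigma^2>0$. On $\mathcal{A}$, since $\epsilon^{\sigma}_{n,L}\leq\epsilon_{alg}$, the assumed RIC/BRIC condition and the TABLE~\ref{tab:OMP_guarantees} guarantee give $\mathcal{S}_{row-est}^{k_{block}}=\mathcal{S}_{row}$; by parts 3)--4) of Lemma~\ref{lemma:minimal_superset} (and $k_{max}\geq k_{block}$) this forces $k_{min}=k_{block}$ and $\mathcal{S}_{row-est}^{k_{min}}=\mathcal{S}_{row}$. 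The remaining ingredient is to show that on $\mathcal{A}$ one also has $RR(k_{block})<\Gamma_{grrt}^{\alpha}(k_{block})$ whenever $\epsilon^{\sigma}_{n,L}\leq\epsilon_{grrt-alg}$: the numerator obeys $\|{\bf R}^{k_{block}}\|_F=\|({\bf I}_n-{\bf P}(\mathcal{S}_{row})){\bf W}\|_F\leq\|{\bf W}\|_F\leq\epsilon^{\sigma}_{n,L}$, while the denominator obeys $\|{\bf R}^{k_{block}-1}\|_F\geq\|{\bf SC}_{k_{block}-1}\|_F-\|{\bf W}\|_F$ by the reverse triangle inequality, and $\|{\bf SC}_{k_{block}-1}\|_F$ is bounded below using the RIC/BRIC and ${\bf B}_{min}$ because at step $k_{block}-1<k_{min}$ at least one true (block-)coordinate is still missing; comparing the resulting ratio with $\Gamma_{grrt}^{\alpha}(k_{block})$ is exactly what defines $\epsilon_{grrt-alg}$. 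Hence on $\mathcal{A}\cap\mathcal{B}$ we have $RR(k_{min})<\Gamma_{grrt}^{\alpha}(k_{min})$ and $RR(k)>\Gamma_{grrt}^{\alpha}(k)$ for all $k>k_{min}$, so from the definition (\ref{eq:grrt}) we get $k_{grrt}=k_{min}=k_{block}$ and $\mathcal{S}_{row-grrt}=\mathcal{S}_{row-est}^{k_{min}}=\mathcal{S}_{row}$; a union bound then yields $\mathbb{P}(\mathcal{S}_{row-grrt}=\mathcal{S}_{row})\geq\mathbb{P}(\mathcal{A}\cap\mathcal{B})\geq 1-1/(nL)-\alpha$.

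For part 2, I would keep the same decomposition but replace $\mathcal{A}$ by a high-SNR event. By part 6) of Lemma~\ref{lemma:minimal_superset}, $\mathbb{P}(k_{min}=k_{block})\rightarrow 1$ as $\sigma^2\rightarrow 0$, and on $\{k_{min}=k_{block}\}$ part 4) gives $\mathcal{S}_{row-est}^{k_{min}}=\mathcal{S}_{row}$. By Lemma~\ref{lemma:RR(k_0)}, $RR(k_{min})\overset{P}{\rightarrow}0$, and since $\Gamma_{grrt}^{\alpha}(k_{block})>0$ is a fixed deterministic constant, $\mathbb{P}\big(\{k_{min}=k_{block}\}\cap\{RR(k_{min})<\Gamma_{grrt}^{\alpha}(k_{min})\}\big)\rightarrow 1$. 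Intersecting with $\mathcal{B}$, whose probability is $\geq 1-\alpha$ uniformly in $\sigma^2$ by Theorem~\ref{thm:RRT_OMP}, and repeating the pathwise argument of part 1 gives $\mathcal{S}_{row-grrt}=\mathcal{S}_{row}$ on that intersection; taking complements and letting $\sigma^2\rightarrow 0$ yields $\limsup_{\sigma^2\rightarrow 0}\mathbb{P}(\mathcal{S}_{row-grrt}\neq\mathcal{S}_{row})\leq\alpha$.

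The main obstacle is the explicit lower bound on $\|{\bf R}^{k_{block}-1}\|_F$ underlying the definition of $\epsilon_{grrt-alg}$: one must quantify, purely in terms of the RIC/BRIC and ${\bf B}_{min}$, how large the residual signal component ${\bf SC}_{k_{block}-1}$ is when exactly one true (block-)coordinate lies outside the current support, and then combine it with the noise perturbation through the reverse triangle inequality so that the ratio stays below $\Gamma_{grrt}^{\alpha}(k_{block})$; everything else is a union bound and careful bookkeeping of the random value of $k_{min}$. A secondary point to watch is that the event $\mathcal{B}$ in Theorem~\ref{thm:RRT_OMP} is indexed by the realized $k_{min}$, so the intersection steps cannot treat $k_{min}$ as a fixed number and must be done realization by realization.
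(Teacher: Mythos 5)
Your proposal is correct and follows essentially the same route as the paper's Appendix C: the paper's events $\mathcal{A}_1$ (true support reached at $k_{min}=k_{block}$, via the TABLE \ref{tab:OMP_guarantees} guarantees), $\mathcal{A}_2$ ($RR(k_{block})<\Gamma_{grrt}^{\alpha}(k_{block})$, via the bound $\|{\bf R}^{k_{block}-1}\|_F\geq \sqrt{1-\delta}\,{\bf B}_{min}-\|{\bf W}\|_F$ plus $\|{\bf R}^{k_{block}}\|_F\leq\|{\bf W}\|_F$) and $\mathcal{A}_3$ (Theorem \ref{thm:RRT_OMP}) are exactly your $\mathcal{A}$-consequences and $\mathcal{B}$, combined with the same union bound and the same high-SNR limit argument for statement 2. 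The one ingredient you flag as the ``main obstacle''---the RIC/BRIC lower bound on the residual one step before $k_{block}$---is handled in the paper simply by invoking the corresponding bounds from the proofs of Theorem 1 in the SOMP and BOMP references, which is precisely the quantity that defines $\epsilon_{grrt-alg}$ in TABLE \ref{tab:GRRT_guarantees}, so no new idea is missing.
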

\begin{proof}The  guarantee for operating OMP is same as that of RRT in \cite{icml}. Proofs for operating SOMP and BOMP using GRRT are provided in Appendix C.      
\end{proof}
Theorem \ref{thm:RRT_guarantee} states that GRRT can recover the support once the noise power $\epsilon^{\sigma}_{n,L}\leq \min\left(\epsilon_{grrt-alg},\epsilon_{alg}\right)$ is slightly lower than  that required for OMP, SOMP and BOMP with \textit{a priori} knowledge of $k_{block}$ or $\sigma^2$ (i.e, $\epsilon^{\sigma}_{n,L} \leq \epsilon_{alg}$). Hence, algorithms operated using GRRT requires slightly higher SNR than that required by the same algorithms when operated with \textit{a priori} knowledge of $k_{block}$ or $\sigma^2$. We discuss the relationship  between excess SNR required for support recovery using GRRT and the GRRT hyper-parameter $\alpha$ in Section \rom{5} after deriving similar guarantees for LASSO. Further, GRRT does not require any extra RIC/BRIC conditions in comparison with $k_{block}$ or $\sigma^2$ aware operation.  Support recovery guarantees for BMMV-OMP using GRRT is not derived  since the support recovery guarantees for BMMV-OMP in noisy conditions is not available. Nevertheless, numerical simulations in Fig.\ref{fig:small_sample} indicate that the performance gap between BMMV-OMP with known $k_{block}$ or $\sigma^2$  and operating BMMV-OMP using GRRT is very minimal.  Further, the hyper-parameter $\alpha$ in GRRT has a simple interpretation of being the high SNR upper bound on the support recovery error. Such simple interpretations for hyper-parameters is not available in most compressive sensing techniques. Next we discuss the application of GRRT for support recovery in algorithms like LASSO. 
\begin{table}[t]
  \centering
  \begin{tabular}{|l|l|l|l|}
  \hline
    Algor-&   $\epsilon_{grrt-alg}$   &  $\mathbb{P}(\mathcal{S}_{row-grrt}$ \\ 
   ithm & &$=\mathcal{S}_{row})$ \\  \hline
OMP &    $\dfrac{\Gamma_{grrt}^{\alpha}(k_{row})\sqrt{1-\delta_{k_{row}+1}}}{1+\Gamma_{grrt}^{\alpha}(k_{row})}{\bf B}_{min}^{smv}$ & $\geq 1-\frac{1}{n}-\alpha$\\ \hline 
SOMP &  $\dfrac{\Gamma_{grrt}^{\alpha}(k_{row})\sqrt{1-\delta_{k_{row}+1}}}{1+\Gamma_{grrt}^{\alpha}(k_{row})}{\bf B}_{min}^{mmv}$ & $\geq 1-\frac{1}{nL}-\alpha$\\ \hline
BOMP &    $\dfrac{\Gamma_{grrt}^{\alpha}(k_{block})\sqrt{1-\delta_{k_{block}+1}^b}}{1+\Gamma_{grrt}^{\alpha}(k_{block})}{\bf B}_{min}^{bsmv}$& $\geq 1-\frac{1}{n}-\alpha$ \\ \hline   
  \end{tabular}
  \caption{Performance guarantees for operating OMP, SOMP and BOMP  using GRRT. }
  \label{tab:GRRT_guarantees}
\end{table} 
\section{Aggregated GRRT For LASSO}
As aforementioned, GRRT in Section \rom{3} and all existing variants of RRT\cite{rrt,icml,robust,elsrrt} are applicable only to monotonic support sequences. This is because of the fact that Theorem \ref{thm:RRT_OMP} which is the crux of GRRT is applicable only to monotonic  support sequences with fixed increments. In this section, we discuss how to apply GRRT for estimating the true support $\mathcal{S}_{row}$ from non-monotonic support estimate sequence produced by LASSO regularization path in Section \rom{2}.  The proposed method is called aggregated GRRT and this involves two steps. In Step 1, we convert the non monotonic support estimate sequence into a monotonic support sequence using an aggregation strategy and in Step 2, we apply GRRT (\ref{eq:grrt}) to the resultant monotonic support estimate sequence. 
\subsection{Aggregating non monotonic supports}
A support aggregation\footnote{Aggregation strategy in TABLE \ref{tab:Support_aggregation} is presented in an offline fashion. This procedure can be implemented online as we compute the regularization path of LASSO. In particular, it is not required to compute the entire regularization path of LASSO to generate a monotonic support sequence of length $k_{max}$ especially when $k_{max}$ is set much smaller than $n$.} scheme to generate monotonic support sequence of user specified length $k_{max}$ given any non-monotonic support sequence  $\{\mathcal{S}^k_{row-est}\}_{k=1}^{N_{knots}}$ is outlined in TABLE \ref{tab:Support_aggregation}. We illustrate this strategy below using an example. \\
{\bf Example 1:-} Suppose  $\mathcal{S}_{row-est}^1=\{1\},\ \mathcal{S}_{row-est}^2=\{1,3\},\ \mathcal{S}_{row-est}^3=\{3\},\ \mathcal{S}_{row-est}^4=\{3,4\}$, $\mathcal{S}_{row-est}^5=\{1,3,4\}$ and let $k_{max}=2$. Here $N_{knots}=5$. Since the index $\{1\}$ got removed in $\mathcal{S}_{row-est}^3$, this sequence is not monotonic. Here $\mathcal{S}_{union}=\{1,1,3,3,3,4,1,3,4\}$ and $\mathcal{S}^{dup}=\{1,3,4\}$. The monotonic sequence thus generated is $\mathcal{S}^1_{row-agg}=\{1\},\mathcal{S}^2_{row-agg}=\{1,3\}$. \\
Please note that for any monotonic sequence $\{\mathcal{S}_{row-est}^k\}_{k=1}^{k_{max}}$, the output of the aggregation scheme in TABLE \ref{tab:Support_aggregation} will be the original sequence itself. 

 \begin{table}
\begin{tabular}{|l|}
\hline
{\bf Input:-} Support estimate sequence $\{\mathcal{S}_{row-est}^k\}_{k=1}^{N_{knots}}$, user specified $k_{max}$. \\    
{\bf Step 1:-} Aggregate: $\mathcal{S}^{union}=\{\mathcal{S}_{row-est}^1,\dotsc,\mathcal{S}_{row-est}^{N_{knots}}\}$. \\
{\bf Step 2:-} Remove  duplicates in $\mathcal{S}^{union}$.\\ Keep first appearance of any index in $\mathcal{S}^{union}$ intact. Gives $\mathcal{S}^{dup}$. \\
{\bf Step 3:-} $\mathcal{S}_{row-agg}^k\leftarrow $ first $k$ entries of $\mathcal{S}^{dup}$ for $k=1,\dotsc,k_{max}$. \\
{\bf Output:-} Monotonic sequence $\{\mathcal{S}^k_{row-agg}\}_{k=1}^{k_{max}}$ satisfying A1)-A2) with $l_b=1$. \\
\hline
\end{tabular}
\caption{Support aggregation.}
\label{tab:Support_aggregation}
\end{table}
Given an aggregated support estimate $\{\mathcal{S}^k_{row-agg}\}_{k=1}^{k_{max}}$, we define the residual ${\bf R}^k_{agg}=({\bf I}_n-{\bf P}(\mathcal{S}^k_{row-agg})){\bf Y}$ and residual ratios $RR_{agg}(k)=\|{\bf R}_{agg}^k\|_F/\|{\bf R}_{agg}^{k-1}\|_F$. 
Like monotonic sequences in Section \rom{3}, we  define minimal superset of the aggregated sequence as $\mathcal{S}^{min}_{row-agg}=\mathcal{S}_{row-agg}^{k_{min-agg}}$, where $k_{min-agg}=\min\{k:\mathcal{S}_{row}\subseteq \mathcal{S}^k_{row-agg}\}$. Like Lemma \ref{lemma:minimal_superset}, exact support recovery from the aggregated sequence $\{\mathcal{S}^k_{row-agg}\}_{k=1}^{k_{max}}$ is possible only if $k_{min-agg}=k_{row}$  or equivalently $\mathcal{S}_{row-agg}^{k_{row}}=\mathcal{S}_{row}$.  
Next we discuss the conditions on the non monotonic sequence $\{\mathcal{S}_{row-est}\}_{k=1}^{N_{knots}}$ such that the aggregated (monotonic) sequence $\{\mathcal{S}_{row-agg}^k\}_{k=1}^{k_{max}}$ satisfies $\mathcal{S}_{row-agg}^{k_{row}}=\mathcal{S}_{row}$.
\begin{lemma}\label{lemma:agg_sequence} $\mathcal{S}_{row-agg}^{k_{row}}=\mathcal{S}_{row}$ or $k_{min-agg}=k_{row}$  iff 
$
min\{k:\ j \in [p]/\mathcal{S}_{row} \ \& \ j \in \mathcal{S}_{row-est}^k\}>\min\{k:\bigcup\limits_{j=1}^k\mathcal{S}_{row-est}^k\supseteq\mathcal{S}_{row}\}$
\end{lemma}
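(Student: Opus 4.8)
The plan is to unpack both sides of the claimed iff in terms of the aggregation procedure in TABLE \ref{tab:Support_aggregation} and the definition of $k_{min-agg}$. Write $\tau = \min\{k:\bigcup_{j=1}^k \mathcal{S}_{row-est}^j \supseteq \mathcal{S}_{row}\}$ for the quantity on the right-hand side of the inequality, and $\rho = \min\{k:\ j\in[p]/\mathcal{S}_{row}\ \&\ j\in\mathcal{S}_{row-est}^k\}$ for the left-hand side, i.e.\ the first knot index at which \emph{any} index outside $\mathcal{S}_{row}$ appears in the LASSO support path. The first observation is that the deduplicated sequence $\mathcal{S}^{dup}$ lists the covariates in the order of their \emph{first appearance} along the path $\{\mathcal{S}_{row-est}^k\}_{k=1}^{N_{knots}}$. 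Hence the first $m$ entries of $\mathcal{S}^{dup}$ are exactly the distinct covariates appearing in $\bigcup_{j=1}^{k} \mathcal{S}_{row-est}^j$ for the smallest $k$ that produces $m$ distinct covariates. So $\mathcal{S}_{row-agg}^k$ equals the set of first $k$ covariates (by first appearance) in the path, and $k_{min-agg}$ is the smallest $k$ such that these first $k$ covariates already contain $\mathcal{S}_{row}$.

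Next I would observe that by Lemma \ref{lemma:minimal_superset}(3) (applied with $l_b=1$, since the aggregated sequence satisfies A1)--A2) with $l_b=1$), $\mathcal{S}_{row-agg}^k$ has cardinality $k$ and $\mathcal{S}_{row}$ has cardinality $k_{row}$, so always $k_{min-agg}\ge k_{row}$; the desired equality $k_{min-agg}=k_{row}$ therefore holds iff the \emph{first} $k_{row}$ distinct covariates along the path are precisely the elements of $\mathcal{S}_{row}$, with no outside index interleaved among them. Now I would relate the two indices: the first $k_{row}$ distinct covariates appear within $\bigcup_{j=1}^{\tau}\mathcal{S}_{row-est}^j$, and by minimality of $\tau$, that union contains $\mathcal{S}_{row}$ but $\bigcup_{j=1}^{\tau-1}\mathcal{S}_{row-est}^j$ does not; so all of $\mathcal{S}_{row}$ has appeared by knot $\tau$ but not by knot $\tau-1$. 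The condition ``the first $k_{row}$ distinct covariates are exactly $\mathcal{S}_{row}$'' then says: no index from $[p]/\mathcal{S}_{row}$ appears at any knot $k \le \tau$ before $\mathcal{S}_{row}$ is fully covered --- equivalently, the earliest knot $\rho$ at which an outside index appears satisfies $\rho > \tau$. For the forward direction, if $k_{min-agg}=k_{row}$ then no outside index can sit among the first $k_{row}$ entries of $\mathcal{S}^{dup}$, so the first outside appearance happens strictly after all of $\mathcal{S}_{row}$ has appeared, i.e.\ $\rho>\tau$. For the converse, if $\rho>\tau$ then every knot $k\le\tau$ contributes only covariates from $\mathcal{S}_{row}$, so $\mathcal{S}_{row-agg}^{k_{row}}\subseteq\mathcal{S}_{row}$, and since both have cardinality $k_{row}$ they are equal, giving $k_{min-agg}=k_{row}$.

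The main obstacle is the bookkeeping in matching the ``first appearance'' ordering produced by Steps 1--2 of TABLE \ref{tab:Support_aggregation} to the set-union description on the right-hand side of the inequality, and making sure the strict inequality $\rho>\tau$ (rather than $\ge$) is exactly what forbids an outside index from preceding the completion of $\mathcal{S}_{row}$ in $\mathcal{S}^{dup}$. In particular one must be careful that ties --- an outside index and a true index both first appearing at the same knot $k=\tau$ --- are handled correctly: if an outside index appears at knot $\tau$, then $\rho\le\tau$, the hypothesis $\rho>\tau$ fails, and indeed $\mathcal{S}^{dup}$ may place that outside index among its first $k_{row}$ entries (depending on intra-knot ordering), which is precisely why the strict inequality is needed. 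Apart from this ordering subtlety, the argument is a direct translation and a cardinality count, so I do not expect any further difficulty.
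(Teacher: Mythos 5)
Your proof is correct, and it is actually more complete than the paper's own argument. For the sufficiency direction (condition $\Rightarrow$ $\mathcal{S}_{row-agg}^{k_{row}}=\mathcal{S}_{row}$) you and the paper argue identically: if no index outside $\mathcal{S}_{row}$ appears before the union of supports first covers $\mathcal{S}_{row}$, the first $k_{row}$ entries of $\mathcal{S}^{dup}$ are exactly the elements of $\mathcal{S}_{row}$, by the first-appearance interpretation of the deduplication step plus a cardinality count. For the necessity direction, however, the paper does not give a general argument at all: it only exhibits one concrete sequence where the condition is violated and $\mathcal{S}_{row-agg}^{k_{row}}\neq\mathcal{S}_{row}$, which illustrates but does not prove the implication $\mathcal{S}_{row-agg}^{k_{row}}=\mathcal{S}_{row}\Rightarrow \rho>\tau$. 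Your contrapositive-style argument (if an outside index first appears at a knot $\le\tau$, then at least $k_{row}$ distinct indices precede the last true index in $\mathcal{S}^{dup}$, pushing it to position $\ge k_{row}+1$) supplies the missing general proof, so your version buys a genuinely rigorous ``iff'' where the paper settles for a sufficiency proof plus an example. One remark on your tie caveat: in the LASSO setting the case $\rho=\tau$ cannot occur, because property B4 guarantees exactly one variable enters or leaves at each knot, so the last member of $\mathcal{S}_{row}$ entering at knot $\tau$ excludes any outside index first appearing there; the intra-knot ordering issue you flag is only relevant if one wants the lemma for general non-monotonic sequences with multiple new indices per step, where the statement as written would indeed need the tie-breaking convention you describe.
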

\begin{proof}
Please see Appendix D.  
\end{proof}
Lemma \ref{lemma:agg_sequence} requires that the first variable $j \notin \mathcal{S}_{row}$ enters the LASSO regularization path only after all variables $j\in \mathcal{S}_{row}$ are added atleast once. The conditions required for the LASSO regularization path to  satisfy  Lemma \ref{lemma:agg_sequence} is given in Theorem \ref{thm:lasso}.  
\begin{thm}\label{thm:lasso}
Suppose that the matrix support pair $({\bf X},\mathcal{S}_{row})$ satisfies conditions C1)-C2) given below and let $\mathcal{S}_{row}^C=[p]/\mathcal{S}_{row}$ denotes the complement of $\mathcal{S}_{row}$.  \\
 C1).  There exists an incoherence parameter $\gamma>0$ such that 
\begin{equation}
\| {\bf X}_{\mathcal{S}_{row}^C}^T{\bf X}_{\mathcal{S}_{row}}({\bf X}_{\mathcal{S}_{row}}^T{\bf X}_{\mathcal{S}_{row}})^{-1}\|_{\infty,\infty}\leq 1-\gamma.
\end{equation}
C2). The minimum eigenvalue of ${\bf X}_{\mathcal{S}_{row}}^T{\bf X}_{\mathcal{S}_{row}}$ satisfies
\begin{equation}
\Delta_{min}({\bf X}_{\mathcal{S}_{row}}^T{\bf X}_{\mathcal{S}_{row}})\geq C_{min}, 
\end{equation} 
for some constant $C_{min}>0$. Then Lemma \ref{lemma:agg_sequence} is satisfied, i.e.,  all variables ${j}\in \mathcal{S}_{row}$ enter the LASSO regularization path before any variable $j \notin \mathcal{S}_{row}$ if $\|{\bf W}\|_F\leq \epsilon_{lasso}$, where
\begin{align*}
\epsilon_{lasso}=\dfrac{C_{min}{\bf B}_{min}^{smv}}{\sqrt{C_{min}}+\dfrac{1.1C_{min}}{\gamma}\|\left({\bf X}_{\mathcal{S}_{row}}^T{\bf X}_{\mathcal{S}_{row}}\right)^{-1}\|_{\infty,\infty}}.
\end{align*}
\end{thm}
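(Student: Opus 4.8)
The plan is to follow the classical \emph{primal-dual witness} (PDW) construction for LASSO (as in Wainwright \cite{wainwright2009sharp}), but to extract from it a \emph{path}-level conclusion rather than the usual single-$\lambda$ support consistency statement. The key observation is that conditions C1)--C2) are exactly the mutual incoherence and minimum-eigenvalue assumptions under which the PDW construction succeeds, and that the construction, when it succeeds, certifies that $\hat{\bf B}_\lambda$ is supported on $\mathcal{S}_{row}$ for \emph{every} $\lambda$ above a threshold $\lambda^\ast$ determined by the noise level. Translating "$\hat{\bf B}_\lambda$ stays supported on $\mathcal{S}_{row}$ for all $\lambda>\lambda^\ast$" into the language of knots gives precisely the conclusion of Lemma \ref{lemma:agg_sequence}: no index $j\notin\mathcal{S}_{row}$ can appear at any knot $\lambda_k$ with $\lambda_k>\lambda^\ast$, so the first time an out-of-support variable enters the path is strictly after the knot at which the last in-support variable has entered (the latter happens while $\lambda$ is still above $\lambda^\ast$, because the in-support block has full rank by C2 and the signal is strong enough once $\|{\bf W}\|_F\le\epsilon_{lasso}$).

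The steps, in order, are: (i) Fix any $\lambda>\lambda^\ast$ and run the PDW construction: solve the restricted LASSO on $\mathcal{S}_{row}$ to get $\hat{\bf B}_{\mathcal{S}_{row}}$, set $\hat{\bf B}_{\mathcal{S}_{row}^C}={\bf 0}$, choose the subgradient $\hat{z}_{\mathcal{S}_{row}}=\operatorname{sign}(\hat{\bf B}_{\mathcal{S}_{row}})$, and solve the stationarity equation for $\hat{z}_{\mathcal{S}_{row}^C}$. (ii) Verify strict dual feasibility $\|\hat{z}_{\mathcal{S}_{row}^C}\|_\infty<1$: expanding $\hat{z}_{\mathcal{S}_{row}^C}$ gives a deterministic incoherence term bounded by $1-\gamma$ via C1, plus a noise term of the form ${\bf X}_{\mathcal{S}_{row}^C}^T({\bf I}_n-{\bf P}(\mathcal{S}_{row})){\bf W}/\lambda$; since $\|{\bf W}\|_F\le\epsilon_{lasso}$ and columns of ${\bf X}$ have unit norm, this term is bounded in $\ell_\infty$ by something like $\|{\bf W}\|_F/\lambda$, which is $<\gamma$ as long as $\lambda$ exceeds a bound proportional to $\|{\bf W}\|_F$. (iii) Verify sign consistency / no "false exclusion": using C2 and the $\ell_\infty$ bound on $(\,{\bf X}_{\mathcal{S}_{row}}^T{\bf X}_{\mathcal{S}_{row}}\,)^{-1}$, show $\|\hat{\bf B}_{\mathcal{S}_{row}}-{\bf B}_{\mathcal{S}_{row}}\|_\infty$ is small enough that $\operatorname{sign}(\hat{\bf B}_{\mathcal{S}_{row}})=\operatorname{sign}({\bf B}_{\mathcal{S}_{row}})$, which needs $\|{\bf B}\|_{min}^{smv}$ to dominate that perturbation; this is where the $C_{min}{\bf B}_{min}^{smv}$ numerator and the second term in the denominator of $\epsilon_{lasso}$ come from. (iv) Collect the two requirements on $\lambda$ and on $\|{\bf W}\|_F$; the worst case over $\lambda$ (pushing $\lambda$ down toward $\lambda^\ast$) yields exactly the stated $\epsilon_{lasso}$, and because the argument holds uniformly for all $\lambda>\lambda^\ast$ it holds for all knots above $\lambda^\ast$, giving Lemma \ref{lemma:agg_sequence}.

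The main obstacle I expect is the bookkeeping in step (iv): making the two $\lambda$-dependent conditions (strict dual feasibility, which wants $\lambda$ \emph{large}, versus the requirement that the relevant knot lies above $\lambda^\ast$, i.e. that in-support variables actually get activated, which wants $\lambda^\ast$ not too large) compatible, and pinning the constants so that the clean closed form for $\epsilon_{lasso}$ drops out — in particular tracking the numerical $1.1$ factor, which presumably arises from bounding $\lambda^\ast$ by a constant multiple of $\|{\bf W}\|_F$ and absorbing a slack. A secondary subtlety is the passage from "supported on $\mathcal{S}_{row}$ for all $\lambda>\lambda^\ast$" to the \emph{ordering} statement in Lemma \ref{lemma:agg_sequence}: one must argue that \emph{every} in-support index does enter the path before $\lambda$ drops to $\lambda^\ast$, which again uses C2 (full rank of ${\bf X}_{\mathcal{S}_{row}}$, so the restricted path genuinely activates all $k_{row}$ coordinates) together with the sign-consistency bound from step (iii). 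Everything else is the standard PDW machinery and the norm estimates are routine given the unit-norm columns of ${\bf X}$ and the bound $\|{\bf W}\|_F\le\epsilon_{lasso}$.
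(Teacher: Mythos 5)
Your proposal follows exactly the route the paper itself takes: it adapts the primal--dual witness construction of Theorem 1 in \cite{wainwright2009sharp} (incoherence C1 for strict dual feasibility, minimum eigenvalue C2 plus ${\bf B}_{min}^{smv}$ for sign consistency) and then argues uniformly over $\lambda$ along the regularization path to get the ordering statement of Lemma \ref{lemma:agg_sequence}, which is precisely how the authors describe their (unpublished, space-limited) proof, including the role of the slack constant absorbed into the $1.1$ factor in $\epsilon_{lasso}$. Your sketch is consistent with the paper's argument and correctly identifies the two delicate points (compatibility of the $\lambda$ requirements and the activation of all in-support variables before any off-support entry), so no further comparison is needed.
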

\begin{proof}Proof of Theorem \ref{thm:lasso} is adapted from the proof of Theorem 1 in \cite{wainwright2009sharp} after accounting for the difference in the normalization of columns. Further, the analysis in \cite{wainwright2009sharp} is tailored towards a fixed value of $\lambda$, whereas, Theorem \ref{thm:lasso} is applicable to the entire regularization path. A detailed proof of Theorem \ref{thm:lasso} is not included because of lack of space.
\end{proof}
Theorem \ref{thm:lasso} ensures that $\mathcal{S}_{row-agg}^{k_{row}}=\mathcal{S}_{row}$ and $k_{min-agg}=k_{row}$. Hence, an oracle estimator with \textit{a priori} knowledge of $k_{row}$ can identify the true support $\mathcal{S}_{row}$ from $\{\mathcal{S}_{row-agg}^{k}\}_{k=1}^{k_{max}}$ by choosing the $k_{row}^{th}$ support estimate. This oracle estimator can recover the true support $\mathcal{S}_{row}$ once $\|{\bf W}\|_{F}\leq \epsilon_{lasso}$. Equivalently, this oracle estimator can recover the support $\mathcal{S}_{row}$ with probability atleast $1-1/n$ once $\epsilon^{\sigma}_{n,1}\leq \epsilon_{lasso}$. These results are similar to the OMP guarantees in TABLE \ref{tab:OMP_guarantees} which give conditions under which $\mathcal{S}_{row-est}^{k_{row}}=\mathcal{S}_{row}$ in SMV scenario. Next we utilize this result to analyze the behaviour of aggregated residual ratios $RR_{agg}(k)$. 
\subsection{Behaviour of aggregated residual ratios $RR_{agg}(k)$}
Theorem \ref{thm:lasso_monotonic} summarizes the behaviour of $RR_{agg}(k)$. 
\begin{thm}\label{thm:lasso_monotonic}
Suppose that conditions C1)-C2) in Theorem \ref{thm:lasso} are satisfied. Let $\Gamma_{grrt}^{\alpha}(k)=\sqrt{F^{-1}_{\frac{n-k}{2},\frac{1}{2}}\left(\dfrac{\alpha}{(p-k+1)k_{max}}\right)}$ and $k_{max}\geq k_{row}$. 
Then, \\   
1. $\underset{\sigma^2 \rightarrow 0}{\lim}\mathbb{P}(k_{min-agg}=k_{row})=1$. \\ 
2. $RR_{agg}(k_{row})\overset{P}{\rightarrow} 0$ and $RR_{agg}(k_{min-agg})\overset{P}{\rightarrow} 0$ as $\sigma^2\rightarrow 0$. \\
3. $
\mathbb{P}\left(\underset{k>k_{min}}{\bigcap}\{RR_{agg}(k)>\Gamma^{\alpha}_{grrt}(k)\}\right)\geq 1-\alpha$ for any fixed $\alpha>0$ and $\forall\sigma^2>0$. 
\end{thm}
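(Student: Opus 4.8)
The three claims are the aggregated-path analogues of Lemma~\ref{lemma:minimal_superset}, Lemma~\ref{lemma:RR(k_0)} and Theorem~\ref{thm:RRT_OMP}, so the plan is to transport each of those arguments to the monotonic sequence $\{\mathcal{S}^k_{row-agg}\}_{k=1}^{k_{max}}$ produced by TABLE~\ref{tab:Support_aggregation}, with Theorem~\ref{thm:lasso} and Lemma~\ref{lemma:agg_sequence} playing the role that the RIC/BRIC support-recovery guarantees played for OMP-type algorithms. I would work throughout on the good event $\mathcal{E}=\{\|{\bf W}\|_F\leq\epsilon_{lasso}\}$, using that $\mathbb{P}(\mathcal{E})\to1$ as $\sigma^2\to0$ since $\|{\bf W}\|_F\overset{P}{\rightarrow}0$ (Lemma~\ref{lemma:minimal_superset}). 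For Part~1, first observe that $card(\mathcal{S}^k_{row-agg})=k$ by construction while $card(\mathcal{S}_{row})=k_{row}$, so $k_{min-agg}\geq k_{row}$ deterministically; on $\mathcal{E}$, Theorem~\ref{thm:lasso} guarantees every $j\in\mathcal{S}_{row}$ enters the LASSO path before any $j\notin\mathcal{S}_{row}$, which is exactly the hypothesis of Lemma~\ref{lemma:agg_sequence}, hence $\mathcal{S}^{k_{row}}_{row-agg}=\mathcal{S}_{row}$ and $k_{min-agg}=k_{row}$ there. Therefore $\mathbb{P}(k_{min-agg}=k_{row})\geq\mathbb{P}(\mathcal{E})\to1$.

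For Part~2, on $\{k_{min-agg}=k_{row}\}$ we have $\mathcal{S}^{k_{row}}_{row-agg}=\mathcal{S}_{row}$, so ${\bf X}{\bf B}$ lies in $span({\bf X}[:,\mathcal{S}_{row}])$ and is annihilated by ${\bf I}_n-{\bf P}(\mathcal{S}^{k_{row}}_{row-agg})$; thus $\|{\bf R}^{k_{row}}_{agg}\|_F=\|({\bf I}_n-{\bf P}(\mathcal{S}_{row})){\bf W}\|_F\leq\|{\bf W}\|_F\overset{P}{\rightarrow}0$. For the denominator, $\mathcal{S}_{row}\not\subseteq\mathcal{S}^{k_{row}-1}_{row-agg}$, so ${\bf R}^{k_{row}-1}_{agg}$ retains a non-vanishing signal component whose Frobenius norm, by condition C2) (a lower bound of the same flavour as the RIC lower bounds in TABLE~\ref{tab:OMP_guarantees}), is at least a positive constant times ${\bf B}_{min}^{smv}$ minus an $O(\|{\bf W}\|_F)$ term, hence bounded away from $0$ with probability tending to $1$. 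Dividing gives $RR_{agg}(k_{row})\overset{P}{\rightarrow}0$; on the vanishing-probability complement use the trivial bound $RR_{agg}(k_{row})\leq1$. Since $\{k_{min-agg}=k_{row}\}$ has probability tending to $1$ by Part~1, $RR_{agg}(k_{min-agg})$ agrees with $RR_{agg}(k_{row})$ with probability tending to $1$ and therefore also converges to $0$ in probability. This is the LASSO version of Lemma~\ref{lemma:RR(k_0)}.

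For Part~3, the aggregated sequence is monotonic with unit increments, i.e. it satisfies A1)-A2) with $l_b=1$, and at step $k$ the new index is drawn from $[p]\setminus\mathcal{S}^{k-1}_{row-agg}$, so there are at most $pos(k)=p-k+1$ possibilities for $\mathcal{S}^k_{diff}$. For $k>k_{min-agg}$ (the statement's $k_{min}$ read as $k_{min-agg}$) both $\mathcal{S}^{k-1}_{row-agg}$ and $\mathcal{S}^k_{row-agg}$ contain $\mathcal{S}_{row}$, so ${\bf R}^{k-1}_{agg}$ and ${\bf R}^k_{agg}$ are projections of the Gaussian ${\bf W}$ onto nested subspaces of dimensions $n-k+1$ and $n-k$. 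This is precisely the configuration of Theorem~\ref{thm:RRT_OMP} with $L=1$, $l_b=1$ and $pos(k)=p-k+1$: conditioning on $\mathcal{S}^{k-1}_{row-agg}$, $RR_{agg}(k)^2$ is a $\mathbb{B}(\tfrac{n-k}{2},\tfrac12)$ random variable for each admissible candidate index, and a union bound over the $pos(k)$ candidates and the $k_{max}$ steps yields $\mathbb{P}\big(\bigcap_{k>k_{min-agg}}\{RR_{agg}(k)>\Gamma_{grrt}^{\alpha}(k)\}\big)\geq1-\alpha$ for every $\sigma^2>0$, with $\Gamma_{grrt}^{\alpha}(k)=\sqrt{F^{-1}_{\frac{n-k}{2},\frac{1}{2}}\!\left(\frac{\alpha}{(p-k+1)k_{max}}\right)}$ as stated. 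So Part~3 follows by applying Theorem~\ref{thm:RRT_OMP} to the aggregated sequence.

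The main obstacle is Part~3: one must verify that the data-dependent aggregated supports genuinely meet the hypotheses of Theorem~\ref{thm:RRT_OMP} — in particular that $rank({\bf X}[:,\mathcal{S}^k_{row-agg}])=k$ so the residual subspaces have the claimed dimensions $n-k$ and $n-k+1$, and that the union bound over candidate indices correctly decouples the LASSO/aggregation selection rule from the noise entering the Beta tail estimate, exactly as in the proof of Theorem~\ref{thm:RRT_OMP} (Appendix~B). A secondary technical point is Part~2, where condition C2) must be turned into a clean ``the unrecovered signal survives the projection'' lower bound on $\|{\bf R}^{k_{row}-1}_{agg}\|_F$, which is a little less automatic than the RIC bound available for OMP but follows the same route. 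The remaining steps are mechanical transcriptions of the OMP-path arguments already established in the excerpt.
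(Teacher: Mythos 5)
Your proposal is correct and follows essentially the same route as the paper: Part 1 via Theorem \ref{thm:lasso} plus Lemma \ref{lemma:agg_sequence} on the event $\|{\bf W}\|_F\leq\epsilon_{lasso}$ whose probability tends to one, Part 2 by the same signal-annihilation/denominator-lower-bound argument underlying Lemma \ref{lemma:RR(k_0)} (and spelled out with C2) in Appendix E), and Part 3 by applying Theorem \ref{thm:RRT_OMP} to the aggregated monotonic sequence with $L=1$, $l_b=1$, $pos(k)=p-k+1$. Your added attention to the full-rank/dimension bookkeeping needed to invoke Theorem \ref{thm:RRT_OMP} for the data-dependent aggregated supports is a reasonable refinement, but it does not change the argument, which the paper states in compressed form.
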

\begin{proof}Since C1)-C2) are satisfied, $\mathcal{S}_{row-agg}^{k}=\mathcal{S}_{row}$ or equivalently $k_{min-agg}=k_{row}$ once $\|{\bf W}\|_F\leq \epsilon_{lasso}$.   1) follows from this and $\underset{\sigma^2\rightarrow 0}{\lim}\mathbb{P}(\|{\bf W}\|_F\leq \epsilon_{lasso})=1$. 2). follows from similar result in Lemma \ref{lemma:RR(k_0)}. 3). follows from Theorem \ref{thm:RRT_OMP} and the fact that the sequence $\{\mathcal{S}_{row-agg}^k\}_{k=1}^{k_{max}}$ is monotonic. The choice of $\Gamma_{grrt}^{\alpha}(k)$ is based on $l_b=1$ and  $L=1$ in SMV scenario. $\{\mathcal{S}_{row-agg}\}_{k=1}^{k_{max}}$ is a sequence satisfying A1)-A2) with increment $l_b=1$ and hence like in OMP $pos(k)=p-k+1$. 
\end{proof} 
Hence if C1) and C2) are true, one can see that at high SNR $k_{min-agg}=k_{row}$ and $RR_{agg}(k_{row})<\Gamma_{grrt}^{\alpha}(k_{row})$, whereas,   $RR_{agg}(k)>\Gamma_{grrt}^{\alpha}(k)$ for $k>k_{min-agg}$  with a high probability $1-\alpha$. Hence, it is clear that one can apply GRRT (\ref{eq:grrt}) to estimate the support $\mathcal{S}_{row}$ from the  aggregated sequence obtained from LASSO. The complete procedure for estimating $\mathcal{S}_{row}$ from  the non-monotonic sequence produced by LASSO using GRRT is given in TABLE \ref{tab: AGRRT}.   Next we derive support recovery guarantees for LASSO using GRRT. 
\begin{thm}\label{thm:arrt}
 Assume that the matrix support pair $({\bf X},\mathcal{S}_{row})$ satisfies the regularity conditions C1)-C2) given in Theorem \ref{thm:lasso} and $k_{max}\geq k_{row}$. Then, for LASSO regularization path\\
1). GRRT returns the true support $\mathcal{S}_{row}$ with a probability atleast $1-\frac{1}{n}-\alpha$ if $\epsilon^{\sigma}_{n,1}\leq \min(\epsilon_{lasso},\epsilon_{grrt-lasso})$,  where
\begin{equation}
\epsilon_{grrt-lasso}= \dfrac{\Gamma^{\alpha}_{grrt}(k_{row})}{1+\Gamma^{\alpha}_{grrt}(k_{row})}C_{min}{\bf B}_{min}^{smv}.
\end{equation}
2. $\underset{\sigma^2\rightarrow 0}{\lim}\mathbb{P}(\mathcal{S}_{row-grrt}\neq \mathcal{S}_{row})\leq \alpha$.
\end{thm}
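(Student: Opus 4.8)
The plan is to reuse the two structural facts already established—Theorem~\ref{thm:lasso} (all true variables enter the LASSO path, hence $k_{min-agg}=k_{row}$ and $\mathcal{S}_{row-agg}^{k_{row}}=\mathcal{S}_{row}$, once $\|{\bf W}\|_F\le\epsilon_{lasso}$) and Theorem~\ref{thm:lasso_monotonic} (high-SNR collapse of $RR_{agg}(k_{row})$ together with the uniform lower bound $RR_{agg}(k)>\Gamma_{grrt}^{\alpha}(k)$ for $k>k_{min-agg}$)—and to glue them together exactly as in the OMP analysis behind Theorem~\ref{thm:RRT_guarantee} and TABLE~\ref{tab:GRRT_guarantees}. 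The only new ingredient needed is a deterministic upper bound on $RR_{agg}(k_{row})$ on the ``good'' noise event, which is what produces the threshold $\epsilon_{grrt-lasso}$.

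For part~1) I would work on the event $E_1=\{\|{\bf W}\|_F\le\epsilon^{\sigma}_{n,1}\}$, which has probability at least $1-1/n$ since ${\bf W}[i,j]\overset{i.i.d}{\sim}\mathcal N(0,\sigma^2)$. Because $\epsilon^{\sigma}_{n,1}\le\epsilon_{lasso}$ and C1)--C2) hold, Theorem~\ref{thm:lasso} (through Lemma~\ref{lemma:agg_sequence}) gives, on $E_1$, $k_{min-agg}=k_{row}$ and $\mathcal{S}_{row-agg}^{k_{row}}=\mathcal{S}_{row}$; since the aggregated sequence is monotonic with unit increments, $\mathcal{S}_{row-agg}^{k_{row}-1}\subset\mathcal{S}_{row}$ misses exactly one true index $i^{*}$. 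The numerator of $RR_{agg}(k_{row})$ is then the pure-noise term $\|({\bf I}_n-{\bf P}(\mathcal{S}_{row})){\bf W}\|_F\le\|{\bf W}\|_F\le\epsilon^{\sigma}_{n,1}$, while by the reverse triangle inequality the denominator is at least $\|({\bf I}_n-{\bf P}(\mathcal{S}_{row-agg}^{k_{row}-1})){\bf X}{\bf B}\|_F-\|{\bf W}\|_F$. The surviving signal energy equals $|{\bf B}[i^{*}]|\,\|({\bf I}_n-{\bf P}(\mathcal{S}_{row-agg}^{k_{row}-1})){\bf X}[:,i^{*}]\|_2$, i.e.\ $|{\bf B}[i^{*}]|$ times the distance of the unit-norm column ${\bf X}[:,i^{*}]$ to the span of the remaining true columns, which by condition C2) is bounded below, giving $\|({\bf I}_n-{\bf P}(\mathcal{S}_{row-agg}^{k_{row}-1})){\bf X}{\bf B}\|_F\ge C_{min}{\bf B}_{min}^{smv}$. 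Combining, $RR_{agg}(k_{row})\le\epsilon^{\sigma}_{n,1}/(C_{min}{\bf B}_{min}^{smv}-\epsilon^{\sigma}_{n,1})$, and a one-line rearrangement shows this is below $\Gamma_{grrt}^{\alpha}(k_{row})$ precisely when $\epsilon^{\sigma}_{n,1}\le\Gamma_{grrt}^{\alpha}(k_{row})C_{min}{\bf B}_{min}^{smv}/(1+\Gamma_{grrt}^{\alpha}(k_{row}))=\epsilon_{grrt-lasso}$, which is assumed.

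Next I would intersect $E_1$ with $E_2=\bigcap_{k>k_{min-agg}}\{RR_{agg}(k)>\Gamma_{grrt}^{\alpha}(k)\}$, which holds with probability at least $1-\alpha$ for every $\sigma^2>0$ by Theorem~\ref{thm:lasso_monotonic}(3) (itself a consequence of Theorem~\ref{thm:RRT_OMP} applied to the monotonic aggregated sequence). On $E_1\cap E_2$, the bound $RR_{agg}(k_{row})<\Gamma_{grrt}^{\alpha}(k_{row})$ forces $k_{grrt}\ge k_{row}$, and since $k_{min-agg}=k_{row}$ on $E_1$, $E_2$ forces $RR_{agg}(k)>\Gamma_{grrt}^{\alpha}(k)$ for every $k>k_{row}$, so $k_{grrt}\le k_{row}$; hence $k_{grrt}=k_{row}$ and $\mathcal{S}_{row-grrt}=\mathcal{S}_{row-agg}^{k_{row}}=\mathcal{S}_{row}$. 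A union bound gives $\mathbb{P}(\mathcal{S}_{row-grrt}=\mathcal{S}_{row})\ge\mathbb{P}(E_1\cap E_2)\ge 1-1/n-\alpha$, which is part~1). For part~2) I would replace $E_1$ by Theorem~\ref{thm:lasso_monotonic}(1)--(2) directly: $\mathbb{P}(k_{min-agg}=k_{row})\to1$ and $RR_{agg}(k_{row})\overset{P}{\rightarrow}0$ as $\sigma^2\to0$, hence $\mathbb{P}(RR_{agg}(k_{row})<\Gamma_{grrt}^{\alpha}(k_{row}))\to1$ because $\Gamma_{grrt}^{\alpha}(k_{row})>0$ is fixed; on the intersection of these two events with $E_2$ the same deduction yields $\mathcal{S}_{row-grrt}=\mathcal{S}_{row}$, so $\limsup_{\sigma^2\to0}\mathbb{P}(\mathcal{S}_{row-grrt}\neq\mathcal{S}_{row})\le\alpha$.

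The one step that needs genuine care is the deterministic lower bound $\|({\bf I}_n-{\bf P}(\mathcal{S}_{row-agg}^{k_{row}-1})){\bf X}{\bf B}\|_F\ge C_{min}{\bf B}_{min}^{smv}$: one must turn the minimum-eigenvalue hypothesis C2) into the single-dropped-column distance bound while carefully tracking the unit-$\ell_2$-norm column normalization—the same bookkeeping flagged in the proof of Theorem~\ref{thm:lasso}. Everything else is a routine assembly of Theorems~\ref{thm:lasso} and \ref{thm:lasso_monotonic}, structurally identical to the OMP/SOMP/BOMP argument.
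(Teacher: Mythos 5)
Your proposal is correct and follows essentially the same route as the paper's Appendix E: the same event decomposition (true support present in the aggregated sequence via Theorem \ref{thm:lasso}, the residual-ratio drop at $k_{row}$ via the triangle-inequality bound $RR_{agg}(k_{row})\leq \|{\bf W}\|_F/(C_{min}{\bf B}_{min}^{smv}-\|{\bf W}\|_F)$, and the uniform lower bound from Theorem \ref{thm:lasso_monotonic}), combined by a union bound and then the $\sigma^2\to 0$ limit for statement 2. The single-dropped-column lower bound you flag as needing care is exactly the step the paper handles by invoking Lemma 5 of \cite{cai2011orthogonal} together with $card(\mathcal{S}_{row}/\mathcal{S}_{row-agg}^{k_{row}-1})=1$.
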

\begin{proof}
Please see Appendix D.
\end{proof}
Statement 1 of Theorem \ref{thm:arrt} implies that GRRT can recover the true support under the same matrix conditions (i.e., C1)-C2)) as required to ensure Theorem \ref{thm:lasso}. However, a slightly higher SNR is required by GRRT than required by Theorem \ref{thm:lasso}. Theorem \ref{thm:arrt} can also be easily replicated with other regularity conditions like RIC, mutual coherence \cite{cai2011orthogonal} etc.  
\subsection{Extension of GRRT to other non-monotonic algorithms}
Many algorithms like SP\cite{dai2009subspace}, CoSaMP\cite{needell2009cosamp} etc. produce non-monotonic support sequences. Even though we presented GRRT in the context of non-monotonic support sequences produced by LASSO, GRRT is applicable to any non-monotonic support sequence. The support estimate sequence at sparsity level $k$, i.e., $\mathcal{S}_{row-est}^k$ is obtained by running SP, CoSaMP with sparsity level $k$ as input. The RIC conditions to ensure $\mathcal{S}_{row-est}^{k_{row}}=\mathcal{S}_{row}$ are available for SP, CoSaMP etc. However, the conditions required to ensure Lemma \ref{lemma:agg_sequence} which is required to ensure  successful support recovery using GRRT framework is not available and has to developed. Likewise, the aggregation scheme presented for SMV can be extended to MMV without any change. For BSMV and BMMV scenarios, one can apply the aggregation strategy to the block support sequence $\{\mathcal{S}_{block-est}^k\}$ instead of row support sequence $\{\mathcal{S}_{row-est}^k\}$ (as in LASSO) to produce monotonic block support sequence from which one can obtain a monotonic row support sequence. After obtaining monotonic row support sequence, one should apply the GRRT estimate in (\ref{eq:grrt}) with scenario specific values of $\Gamma_{grrt}^{\alpha}(k)$.
 
 \begin{table}
\begin{tabular}{|l|}
\hline
{\bf Input:-} ${\bf Y}$, ${\bf X}$ and hyper-parameters  $k_{max}$,  $\alpha$. \\    
{\bf Step 1:-} Compute the support sequence $\{\mathcal{S}_{row-est}^{k}\}$.\\
{\bf Step 2:-} Generate aggregate support sequence $\{\mathcal{S}_{row-agg}^k\}_{k=1}^{k_{max}}$.\\
{\bf Step 3:-} Compute the residual ratio sequence $\{RR_{agg}(k)\}_{k=1}^{k_{max}}$.\\
{\bf Step 4:-} Compute the GRRT support estimate (\ref{eq:grrt}).\\
{\bf Output:-} Support estimate $\mathcal{S}_{row-grrt}$ and \\
\ \ \ \ \ \ \ \ \ \ \ Signal estimate $\hat{\bf B}=\text{LS-estimate}({\bf Y},{\bf X},\mathcal{S}_{row-grrt})$.\\
\hline
\end{tabular}
\caption{GRRT for any support estimate sequence in SMV scenario. For monotonic algorithms $\mathcal{S}_{row-agg}^k=\mathcal{S}_{row-est}^k$. Hence, Step 2 can be skipped. }
\label{tab: AGRRT}
\end{table}
\section{Choice of GRRT hyper-parameters $k_{max}$ and $\alpha$}
GRRT requires two hyper-parameters $k_{max}$ and $\alpha$. In this section, we discuss how to set these parameters  without the knowledge of signal and noise statistics.
\subsection{Choice of $k_{max}=\floor{\frac{n+1}{2l_b}}$} 
The only condition required by GRRT on $k_{max}$ is that $k_{max}\geq k_{block}$ (Recall that $k_{block}=k_{row}$ for SMV and MMV). The choice $k_{max}=\floor{\frac{n+1}{2l_b}}$ satisfies this requirement as argued next. For SMV and BSMV scenarios (i.e., $L=1$), the maximum row sparsity $k_{row}$ up to which any sparse recovery algorithm is expected to guarantee perfect support recovery in noiseless scenario is $k_{row}=l_bk_{block}<\floor{\frac{n+1}{2}}$\cite{eldar2012compressed}. Hence, by choosing $k_{max}=\floor{\frac{n+1}{2l_b}}$ one can ensure that $k_{max}\geq k_{block}$ in all sparsity regimes where algorithms like OMP, LASSO and BOMP are expected to work well. For scenarios with $L>1$ (i.e., MMV and BMMV), the maximum sparsity upto which any algorithm can guarantee perfect support recovery in noiseless scenario is $k_{row}=l_bk_{block}\leq \floor{\frac{n+L}{2}}$ \cite{cotter2005sparse}. This warrants a choice of $k_{max}=\floor{\frac{n+L}{2l_b}}$ to ensure $k_{max}\geq k_{block}$. 
Larger values of $k_{max}$ requires  running more iterations of algorithms and this result in higher computational effort. The theoretical guarantees for SOMP (i.e., $\delta_{k_{row}+1}<1/\sqrt{k_{row}+1}$) and BMMV-OMP (i.e., $\delta_{k_{block}+1}^b<1/\sqrt{k_{block}+1}$) both requires $n=O(k_{block}^2\log(p))$ measurements. This implies that $k_{max}=\floor{\frac{n+1}{2l_b}}$ is sufficient to ensure $k_{max}\geq k_{block}$ in all regions where SOMP and BMMV-OMP are known to operate well. 
\subsection{Choice of $\alpha=0.01$} 
We next discuss the choice of $\alpha$ with SOMP as an example. Similar arguments hold true for other algorithms. Theorem \ref{thm:RRT_guarantee} implies that SOMP operated using GRRT can recover true support with probability atleast $1-\alpha-1/(nL)$ once $\epsilon_{n,L}^{\sigma}\leq \min(\epsilon_{grrt-somp},\epsilon_{somp})$. As one can see from Fig.\ref{fig:RR(k)}, $\Gamma_{grrt}^{\alpha}(k_{row})$ increases monotonically with increasing $\alpha$\cite{icml}. Hence, setting a higher value of $\alpha$ increases $\Gamma_{grrt}^{\alpha}(k_{row})$ which inturn increases $\epsilon_{grrt-somp}=\dfrac{\Gamma_{grrt}^{\alpha}(k_{row})\sqrt{1-\delta_{k_{row}+1}}}{1+\Gamma_{grrt}^{\alpha}(k_{row})}{\bf B}_{min}^{smv}$. This will increase $\min(\epsilon_{grrt-somp},\epsilon_{somp})$ resulting in reduced SNR requirements compared to SOMP with known $k_{row}$. However, an increase in $\alpha$ reduces the support recovery probability $1-\alpha-1/(nL)$. Hence, there exists a trade-off between SNR requirement and support recovery probability in GRRT. Numerical simulations in this article and past results \cite{rrt,robust,icml} suggest that a choice of $\alpha$ like $\alpha=0.01$ or $\alpha=0.1$ delivers high quality estimation performance, whereas, a value of like $\alpha=0.01$ delivers impressive support recovery performance at all SNR. Hence, we recommend a choice of $\alpha=0.01$ in all experiment scenarios and all SNR regimes. Very importantly, unlike the parameter $\lambda$ in (\ref{lasso}) which has to be tuned differently for different SNRs, user can operate GRRT with the same value of $\alpha$ at all SNR and dfiferent values of $n$, $p$, $L$, $l_b$ etc. 

\begin{figure*}
\begin{multicols}{2}

    \includegraphics[width=\linewidth]{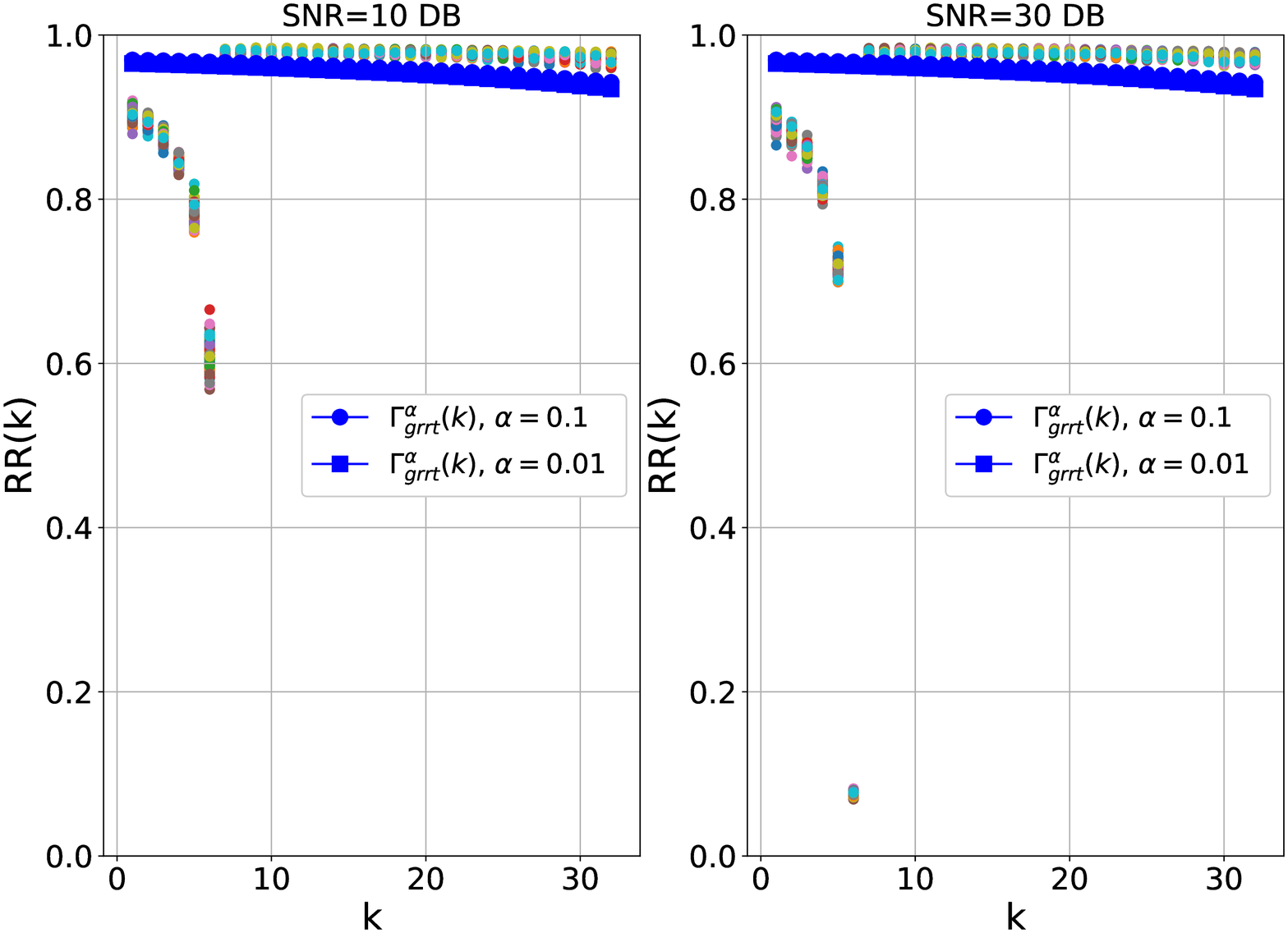} 
    \caption*{a). SOMP. $L=10$. $k_{row}=6$.}
    
    \includegraphics[width=\linewidth]{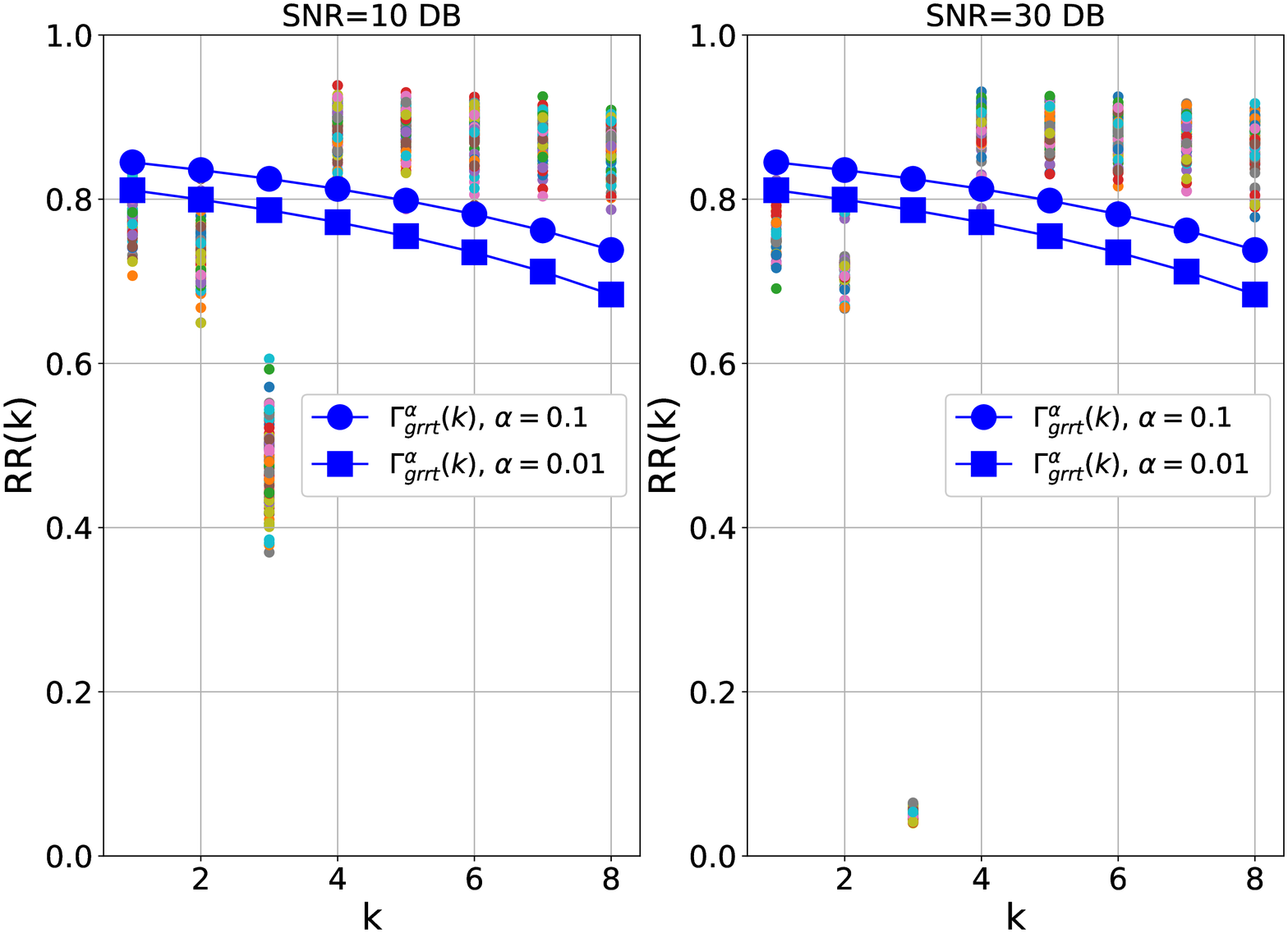} 
    \caption*{b). BOMP. $l_b=4$ and $k_{block}=3$.}
    \end{multicols}
   
   \begin{multicols}{2}

    \includegraphics[width=\linewidth]{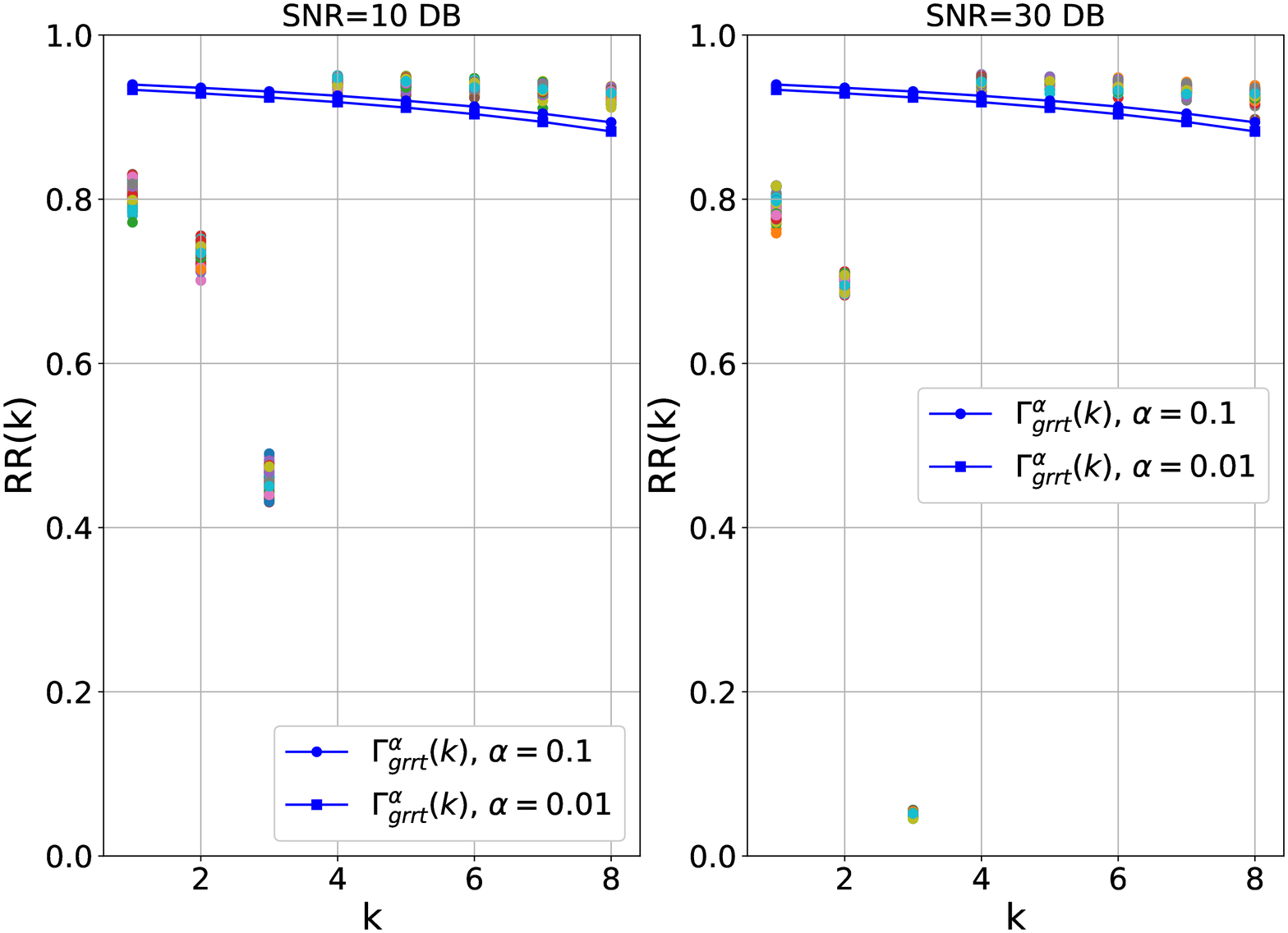} 
    \caption*{c). BMMV-OMP. $L=10$, $k_{block}=3$ and $l_b=4$.}
    
    \includegraphics[width=\linewidth]{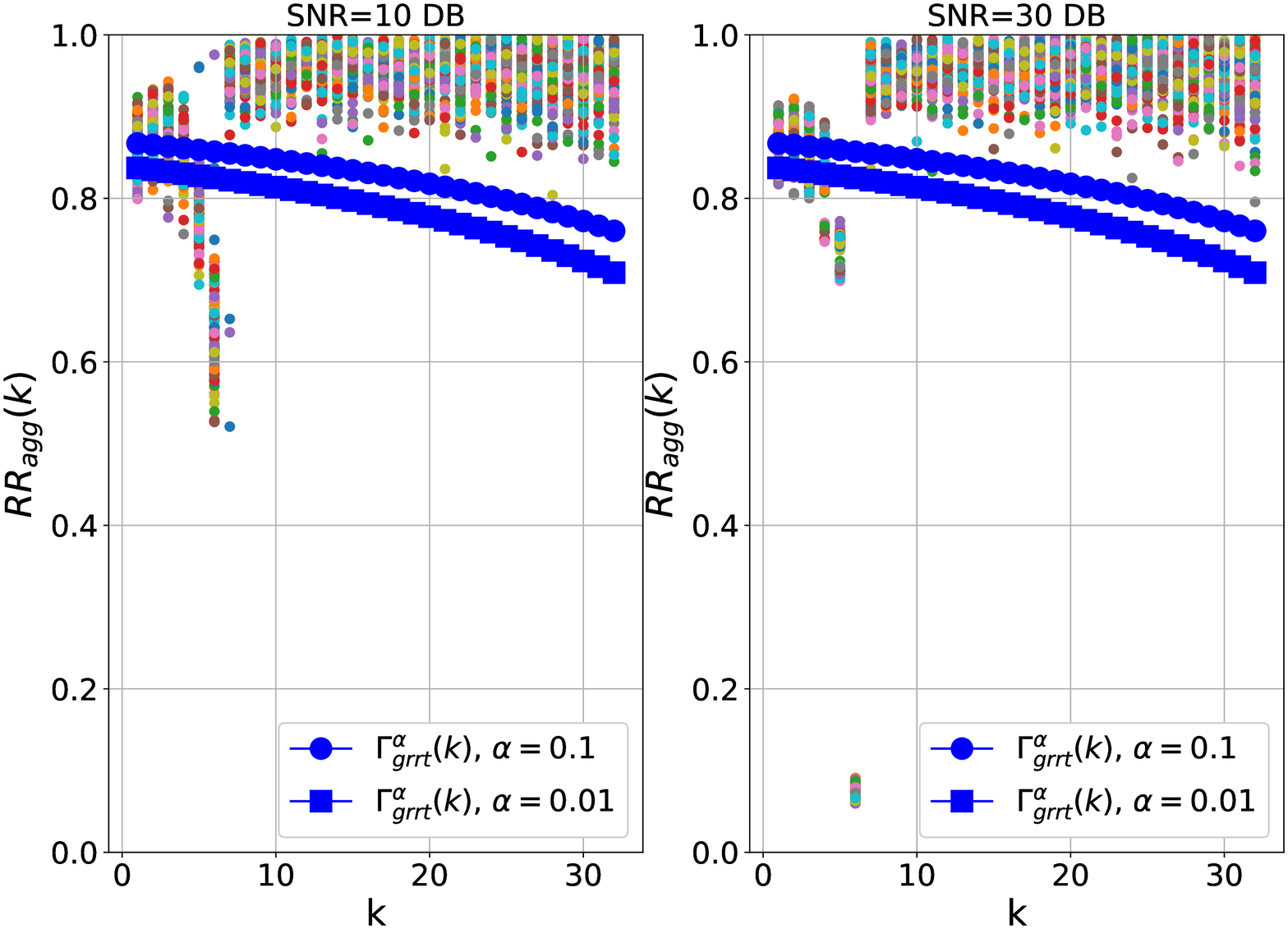} 
    \caption*{d).  LASSO $k_{row}=6$.}
    \end{multicols} 
    
   \caption{Behaviour of $RR(k)$ at SNR=10DB and SNR=30DB. $n=64$ and $p=128$.}
   \label{fig:RR(k)}
   
\end{figure*}    

\section{Numerical Simulations}
In this section, we first numerically validate the theoretical results in Lemma \ref{lemma:RR(k_0)}, Theorem \ref{thm:RRT_OMP} and Theorem \ref{thm:lasso_monotonic} regarding the behaviour of $RR(k)$ and $RR_{agg}(k)$. Later, we compare the performance of various algorithms with \textit{a priori} knowledge of $k_{block}$, $k_{row}$, $\|{\bf W}\|_F$ and $\sigma^2$ against the performance of same algorithms operated in a signal and noise statistics oblivious fashion using GRRT. 
The matrix we consider  for our experiments is the widely studied concatenation of ${\bf I}_n$  and a $n\times n$ Hadamard matrix with columns normalized to have unit length\cite{elad2010sparse}. We set $n=64$ and $p=128$. FOR SMV and MMV scenarios, we set $k_{row}=6$. For BSMV and BMMV scenarios,  we set $k_{block}=3$ and  $l_b=4$ (i.e, $k_{row}=12$). We set $k_{max}=\floor{\frac{n+1}{2l_b}}$.  For SMV and MMV, $\mathcal{S}_{row}$ is sampled randomly from the set $[p]$.  For BSMV and BMMV,  $\mathcal{S}_{block}$ is sampled randomly from the set $[p_b]$. In both cases, the non zero entries in ${\bf B}$ are randomly assigned $\pm 1$. SNR in this  experiment setting is given by SNR$=\frac{k_{row}}{n\sigma^2}$. We evaluate algorithms in terms of MSE and  PE. Both MSE and PE are reported after $10^5$ Montecarlo runs for OMP like algorithms and $10^4$ runs for LASSO.
\begin{figure*}
\begin{multicols}{2}

    \includegraphics[width=\linewidth]{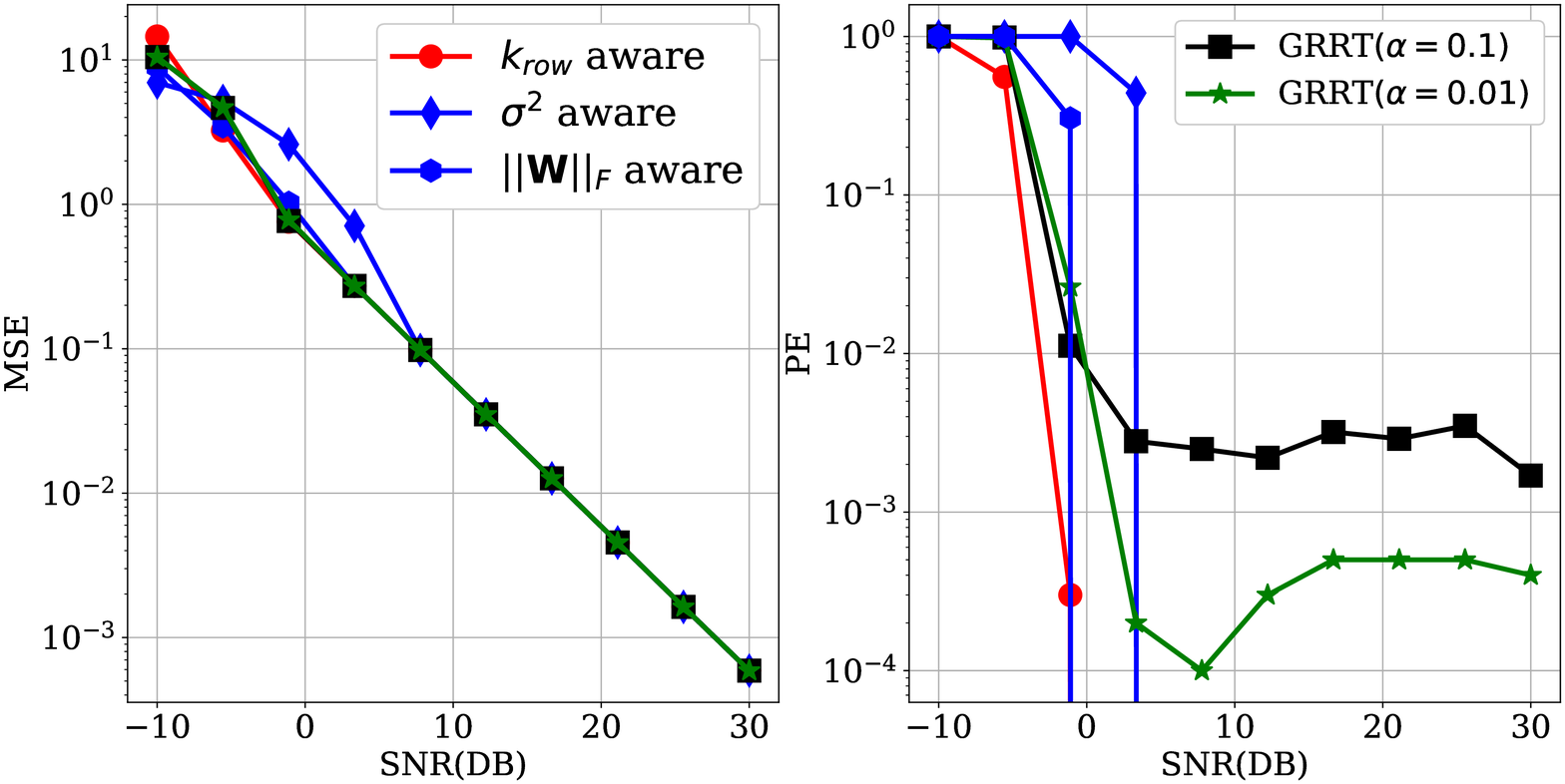} 
    \caption*{a). SOMP. $L=10$. $k_{row}=6$ ($k_{block}=6$)}
    
    \includegraphics[width=\linewidth]{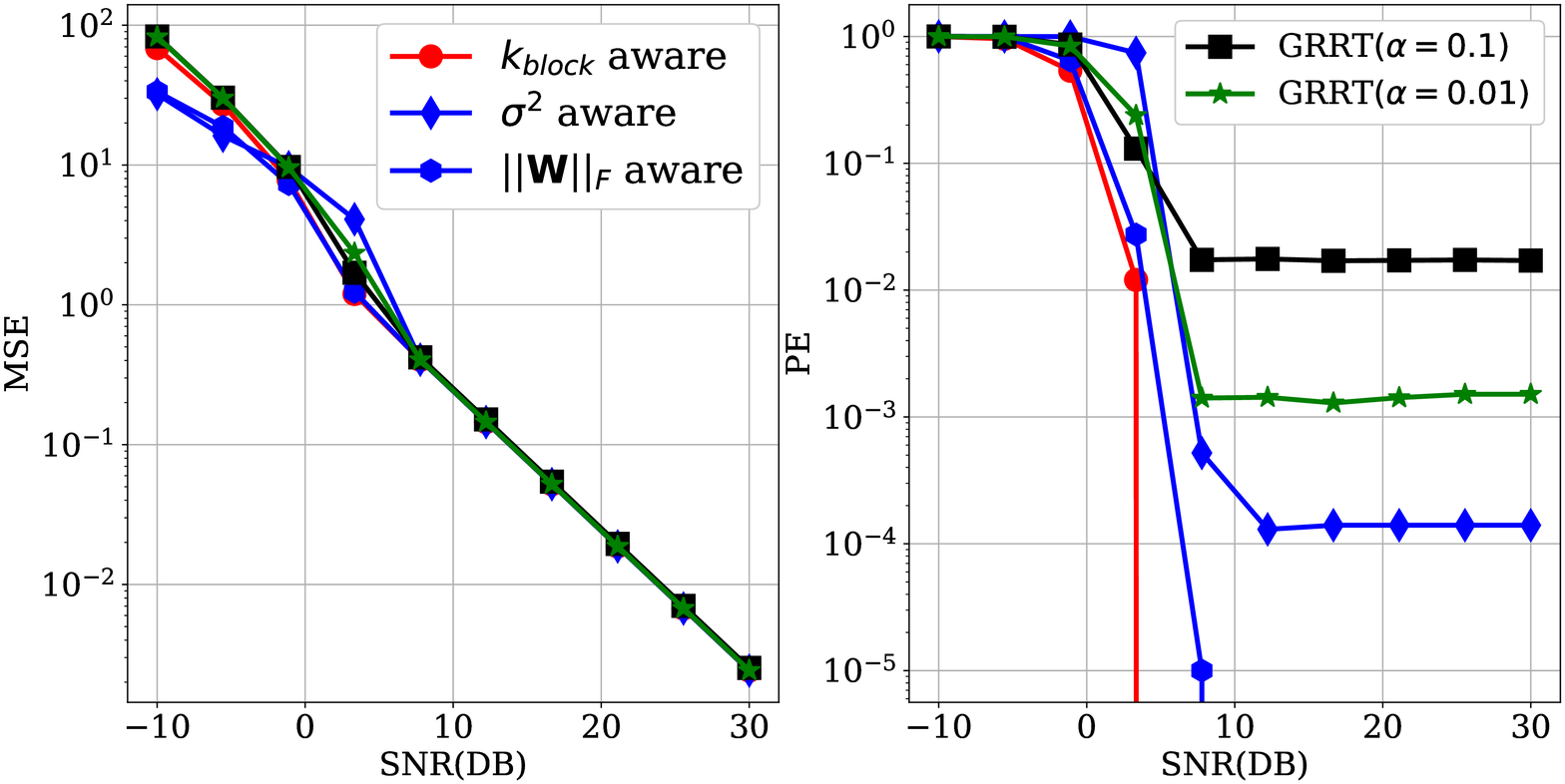} 
    \caption*{b). BOMP. $l_b=4$ and $k_{block}=3$ ($k_{row}=12$).}
    \end{multicols}
   
   \begin{multicols}{2}

    \includegraphics[width=\linewidth]{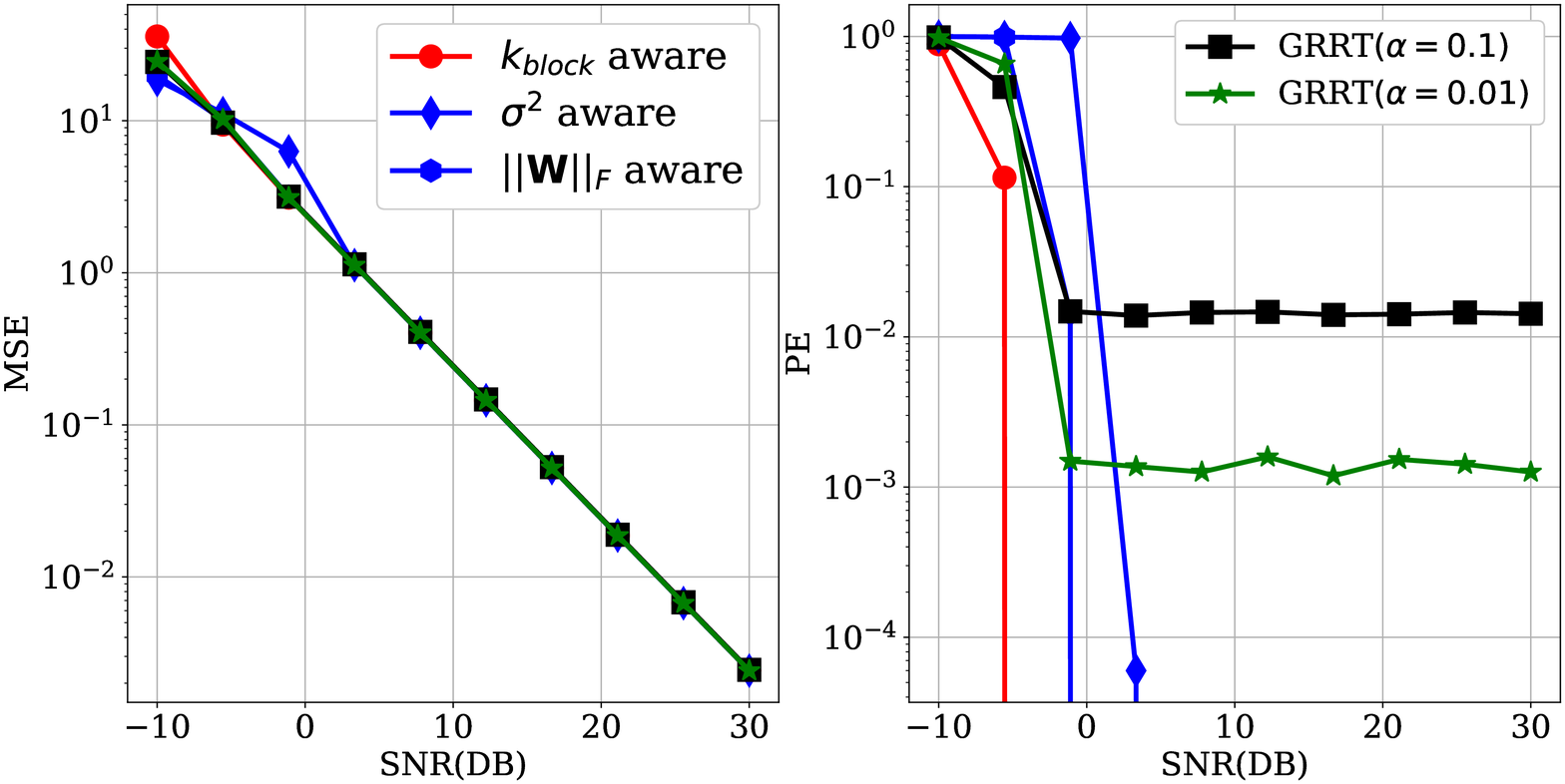} 
    \caption*{c). BMMV-OMP. $L=10$, $k_{block}=3$, $l_b=4$ ($k_{row}=12$).}
    
    \includegraphics[width=\linewidth]{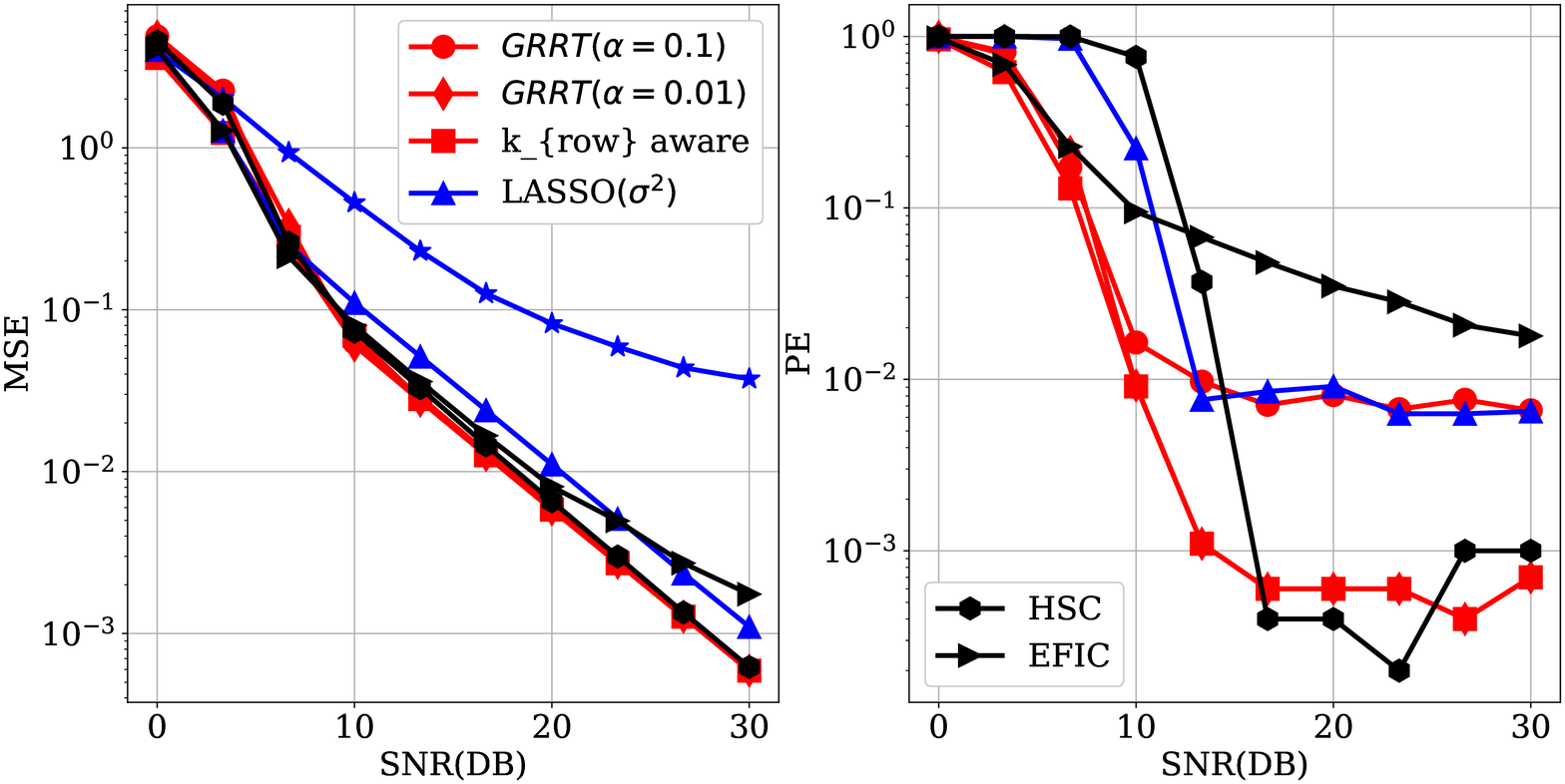} 
    \caption*{d).  LASSO. $L=1$. $k_{row}=6$ ($k_{block}=6$).}
    \end{multicols} 
    
   \caption{ MSE and PE performances w.r.t SNR. $n=64$ and $p=128$.}
   \label{fig:small_sample}
   
\end{figure*}

\subsection{Validating Theorems \ref{thm:RRT_OMP}, \ref{thm:lasso_monotonic} and Lemma \ref{lemma:RR(k_0)}}
In Fig.\ref{fig:RR(k)} we scatter the values of $RR(k)$ produced by SOMP, BOMP and BMMV-OMP and values of aggregated residual ratios $RR_{agg}(k)$ produced by LASSO over $10^3$ runs of respective algorithms at two different SNRs. For SOMP, BOMP and BMMV-OMP, the value of $RR(k)$ at $k=k_{block}$ (i.e, $k=6$ for SOMP and $k=3$ for BOMP/BMMV-OMP) at SNR=30DB is much smaller than the value of same at SNR=10DB. Similar observation holds true for $RR_{agg}(k_{row})$ for LASSO. This validates the high SNR convergence results in Lemma \ref{lemma:RR(k_0)} and Theorem \ref{thm:lasso_monotonic}. Also one can see that the bulk of values of $RR(k)$ and $RR_{agg}(k)$ for $k>k_{row}$ (or $k>k_{block}$) lies above the deterministic sequence $\Gamma_{grrt}^{\alpha}(k)$. Please note that with increasing SNR $k_{min}\approx k_{block}$. This validates the probabilistic bounds on $RR(k)$ for $k>k_{min}$ in Theorem \ref{thm:RRT_OMP} and $RR_{agg}(k)$ for $k>k_{min-agg}$ in Theorem \ref{thm:lasso_monotonic}. 
\subsection{Performance comparisons: OMP like algorithms} 
The results are presented in Fig.2 a)-c). Here, ``$k_{row}/k_{block}$ aware" denotes the performance of algorithms that run exactly $k_{row}/k_{block}$ iterations. ``$\|{\bf W}\|_F$ aware" and ``$\sigma^2$ aware" denote the performance of algorithms when the iterations are stopped once $\|{\bf R}^k\|_F\leq \|{\bf W}\|_F$ and $\|{\bf R}^k\|_F\leq \epsilon^{\sigma}_{n,L}$. 
From Fig.2, it is clear that GRRT with both $\alpha=0.1$ and $\alpha=0.01$ have similar performance in terms of MSE across the entire SNR range in comparison with $\{k_{row},k_{block}\}$, $\|{\bf W}\|_F$ and $\sigma^2$ aware schemes. In terms of PE, $k_{row}$ aware schemes have the best performance followed by $\|{\bf W}\|_F$ aware schemes.  Note that $\|{\bf W}\|_F$ aware schemes knows the noise norm for each random realization of ${\bf W}$, whereas, $\sigma^2$ aware schemes only have statistical information regarding ${\bf W}$. Hence, $\|{\bf W}\|_F$ aware schemes are  more accurate than $\sigma^2$ aware schemes. GRRT have similar performance compared to $k_{row}$ and better performance than $\|{\bf W}\|_F$ or $\sigma^2$ aware schemes at low SNR. However, with increasing SNR, the PE of GRRT floors. The high SNR values of PE for GRRT satisfies  $PE\leq \alpha$ as stated in Theorem \ref{thm:RRT_guarantee}. In some experiments, PE of $\sigma^2$ aware schemes also exhibit flooring as explained in \cite{HSC}. Reiterating, this impressive MSE and PE performance is achieved by GRRT without any information regarding signal and noise statistics. Also, the performance of GRRT for $\alpha=0.1$ and $\alpha=0.01$ are similar in terms of MSE. However, PE performance of GRRT with $\alpha=0.01$ is significantly better than that of $\alpha=0.1$. 
\subsection{Performance comparisons: LASSO}
Next we numerically compare the performance of operating LASSO using GRRT and  different LASSO based schemes discussed in literature. The results are presented in Fig.2 d).  ``$k_{row}$ aware" represents a LASSO scheme which selects the $\mathcal{S}_{row-agg}^{k_{row}}$ as the support estimate.  LASSO$(\sigma^2)$ denotes LASSO in (\ref{lasso}) with $\lambda=2\sigma\sqrt{10\log(p)}$\cite{ben2010coherence} and HSC denotes the high SNR consistent version of LASSO with $\lambda=2\sigma^{0.9}\sqrt{10\log(p)}$\cite{HSC}. Both these schemes have \textit{a priori} knowledge of $\sigma^2$.   ``EFIC" denotes the information theoretic criteria based support estimation scheme for LASSO regularization path proposed in \cite{ITC}. Like GRRT,  EFIC is oblivious to both $k_{row}$ and $\sigma^2$.  Since final estimate of ${\bf B}$ in GRRT and EFIC are based on an LS estimate, we re-estimate the significant entries of LASSO($\sigma^2$) and HSC using LS estimate. Such re-estimation significantly improves LASSO estimation performance. 

From Fig.2 d) one can see that the MSE performance of GRRT is very similar to or better than EFIC and $k_{row}$  or $\sigma^2$ aware LASSO schemes across the entire SNR range. GRRT has PE performance that matches $k_{row}$ aware schemes in the low to medium SNR range. This validates the results in Theorem \ref{thm:arrt} which states that GRRT incurs only a small SNR penalty  in comparison to the $k_{row}$ aware scheme. However, like the case of OMP based schemes, GRRT exhibits flooring of PE with increasing SNR. As stated in Theorem \ref{thm:arrt}, the PE of GRRT at high SNR satisfies $PE\leq \alpha$.  Further, the PE of GRRT with both values of $\alpha$ are better than the $\sigma^2$ aware schemes (\textit{viz,} LASSO$(\sigma^2)$ and HSC) and the signal and noise statistics oblivious schemes EFIC. The suboptimal high SNR PE performance of LASSO$(\sigma^2)$ is also noted in \cite{HSC}. The inferior PE performance of EFIC can be because of the difference in normalizations of columns in ${\bf X}$ between \cite{ITC} and our experiment setting. These results demonstrate the potential of GRRT in solving estimation and support recovery problems using LASSO in practical situations where $\sigma^2$ and $k_{row}$ are both unavailable.   Similar performance results for OMP like algorithms and LASSO were also obtained  with  a random design matrix ${\bf X}[i,j]\overset{i.i.d}{\sim}\mathcal{N}(0,1/n)$ and different values of $n$, $p$, $L$, $l_b$, $k_{block}$ etc. Further, these results are similar to the results in \cite{icml,elsrrt} where GRRT was used to operate OMP in SMV scenario.  The simulation results for SP, CoSaMP etc. are not presented because of lack of space. 
\section{Conclusions}
In this article, we presented a novel model selection technique called GRRT to operate signal and noise statistics dependent support recovery algorithms in SMV, BSMV, MMV and BMMV scenarios in a signal and noise statistics agnostic fashion with finite sample and finite SNR guarantees. Numerical simulations and theoretical results indicate that algorithms operated using GRRT suffer only a small SNR penalty in comparison with the performance of algorithms provided with \textit{a priori} knowledge of signal and noise statistics. 

\section*{\bf Appendix A: Projection matrices and distributions (used in the proof of Theorem \ref{thm:RRT_OMP})}
Consider two fixed row supports  $\mathcal{S}_1\subset \mathcal{S}_2$ of cardinality $k_1$ and $k_2$. Let ${\bf W} \in \mathbb{R}^{n \times L}$ and ${\bf W}_{i,j}\overset{i.i.d}{\sim}\mathcal{N}(0,\sigma^2)$. Since ${\bf P}({\mathcal{S}_1})$ is a projection matrix of rank $k_1$, It follows from standard results\footnote{$\chi^2_k$ is a central chi squared R.V with $k$ degrees of freedom. } that $\|{\bf P}({\mathcal{S}_1}){\bf W}[:,i]\|_2^2/\sigma^2\sim  \chi^2_{k_1}$  for each $i \in [L]$. Also note that for independent $X\sim \chi^2_{k_1}$ and $Y\sim \chi^2_{k_2}$, $X+Y\sim \chi^2_{k_1+k_2}$\cite{ravishanker2001first}.  Since  ${\bf W}[:,i]$ for $i \in [L]$ are independent,  
\begin{equation}
\|{\bf P}({\mathcal{S}_1}){\bf W}\|_F^2/\sigma^2=\sum\limits_{i=1}^L\|{\bf P}({\mathcal{S}_1}){\bf W}[:,i]\|_2^2/\sigma^2\sim \chi^2_{k_1 L}.
\end{equation} 
Similarly, ${\bf I}_n-{\bf P}({\mathcal{S}_1})$ being a projection matrix of rank $n-k_1$ implies that  $\|\left({\bf I}_n-{\bf P}({\mathcal{S}_1})\right){\bf W}\|_F^2/\sigma^2\sim \chi^2_{(n-k_1)L}$. 
 Using the properties of projection matrices, one can show that $\left({\bf I}_n-{\bf P}({\mathcal{S}_2})\right)\left({\bf P}({\mathcal{S}_2})-{\bf P}({\mathcal{S}_1})\right)={\bf O}_{n,n}$. This implies that $\|\left({\bf I}_n-{\bf P}({\mathcal{S}_1})\right){\bf W}\|_F^2$
\begin{equation}
\begin{array}{ll}
&=\|\left({\bf I}_n-{\bf P}({\mathcal{S}_2})\right){\bf W}+\left({\bf P}({\mathcal{S}_2})-{\bf P}({\mathcal{S}_1})\right){\bf W}\|_F^2\\
&=\|\left({\bf I}_n-{\bf P}(\mathcal{S}_2)\right){\bf W}\|_F^2+\|\left({\bf P}(\mathcal{S}_2)-{\bf P}({\mathcal{S}_1})\right){\bf W}\|_F^2
\end{array}
\end{equation}  
The orthogonality of ${\bf I}_n-{\bf P}({\mathcal{S}_2})$ and $({\bf P}_{\mathcal{S}_2}-{\bf P}_{\mathcal{S}_1})$ implies that the R.Vs $\|\left({\bf I}_n-{\bf P}({\mathcal{S}_2})\right){\bf W}\|_F^2$ and $\|\left({\bf P}(\mathcal{S}_2)-{\bf P}({\mathcal{S}_1})\right){\bf W}\|_F^2$ are uncorrelated and hence independent (since ${\bf W}[i,j]$ is Gaussian). Further, $\left({\bf P}({\mathcal{S}_2})-{\bf P}({\mathcal{S}_1})\right)$ is a projection\footnote{$span({\bf X}_{\mathcal{S}_1})^{\perp}$ is the orthogonal subspace of $span({\bf X}_{\mathcal{S}_1})$.} matrix projecting onto the subspace $span({\bf X}_{\mathcal{S}_2})\cap span({\bf X}_{\mathcal{S}_1})^{\perp}$ of dimensions $k_2-k_1$ \cite{yanai2011projection}.  Hence, $\|\left({\bf P}({\mathcal{S}_2})-{\bf P}({\mathcal{S}_1})\right){\bf W}\|_F^2/\sigma^2\sim \chi^2_{(k_2-k_1)L}$. 
 
It is well known in statistics that the R.V $X_1/(X_1+X_2)$, where $X_1\sim \chi^2_{n_1}$ and $X_2\sim \chi^2_{n_2}$ are two independent chi squared R.Vs have a $\mathbb{B}(\frac{n_1}{2},\frac{n_2}{2})$ distribution\cite{ravishanker2001first}. Applying these results gives $\dfrac{\|\left({\bf I}_n-{\bf P}({\mathcal{S}_2})\right){\bf W}\|_F^2}{\|\left({\bf I}_n-{\bf P}({\mathcal{S}_1})\right){\bf W}\|_F^2}$
\begin{equation}\label{eq:beta}
\begin{array}{ll}
=\dfrac{\|\left({\bf I}_n-{\bf P}({\mathcal{S}_2})\right){\bf W}\|_F^2}{\|\left({\bf I}_n-{\bf P}({\mathcal{S}_2})\right){\bf W}\|_F^2+ \|\left({\bf P}({\mathcal{S}_2})-{\bf P}({\mathcal{S}_1})\right){\bf W}\|_F^2}\\
=\dfrac{\|\left({\bf I}_n-{\bf P}({\mathcal{S}_2})\right){\bf W}\|_F^2/\sigma^2}{\|\left({\bf I}_n-{\bf P}({\mathcal{S}_2})\right){\bf W}\|_2^2/\sigma^2+ \|\left({\bf P}({\mathcal{S}_2})-{\bf P}({\mathcal{S}_1})\right){\bf W}\|_F^2/\sigma^2}\\
 \sim \dfrac{\chi^2_{(n-k_2)L}}{\chi^2_{(n-k_2)L}+\chi^2_{(k_2-k_1)L}} \sim \mathbb{B}\left(\dfrac{(n-k_2)L}{2},\dfrac{(k_2-k_1)L}{2}\right)
\end{array}
\end{equation} 
\section*{\bf Appendix B: Proof of Theorem \ref{thm:RRT_OMP} }
\begin{proof}
Reiterating, $k_{min}=\min\{k:\mathcal{S}_{row}\subseteq \mathcal{S}_{row-est}^k\}$, where $\mathcal{S}_{row-est}^k$ for $k in [k_{max}]$ is the support estimate of cardinality $kl_b$ returned by an algorithm satisfying A1)-A2) after the $k^{th}$ iteration.   $k_{min}$ is a R.V taking values in $\{k_{block},k_{block}+1,\dotsc,k_{max},\infty\}$.   Proof of Theorem \ref{thm:RRT_OMP} proceeds by conditioning on the R.V $k_{min}$ and  by lower bounding  $RR(k)$ for $k>k_{min}$  using  R.Vs with known distribution. 

{\bf Case 1:-}  {\bf Conditioning on $k_{block}\leq k_{min}=j<k_{max}$}.  
 Consider the step $k-1$ of an algorithm satisfying A1)-A2) where $k>j$. Current support estimate $\mathcal{S}_{row-est}^{k-1}$ is itself a R.V.   Let $\mathcal{L}_{k-1}\subseteq \{[p_b]/\mathcal{S}_{block-est}^{k-1}\}$ represents the set of all all possible indices $l$ that can be selected by algorithm at step $k-1$ such that ${\bf X}[:,\mathcal{S}_{row-est}^{k-1}\cup \mathcal{I}_l]$ is full rank. By our definition, $card(\mathcal{L}_{k-1})\leq pos(k)$. Likewise, let $\mathcal{K}^{k-1}$ represents the set of all possibilities for the set $\mathcal{S}_{row-est}^{k-1}$ that would also satisfy the constraint $k> k_{min}=j$. 
 Conditional on both $k_{min}=j$ and $\mathcal{S}^{k-1}_{row-est}=\mathcal{J}_{k-1}$, the R.V $\|{\bf R}^{k-1}\|_F^2/\sigma^2\sim \chi^2_{(n-(k-1)l_b)L}$ and  $\|\left({\bf I}_n-{\bf P}(\mathcal{S}^{k-1}_{row-est}\bigcup \mathcal{I}_l)\right){\bf W}\|_F^2/\sigma^2\sim \chi^2_{(n-kl_b)L}$. Define the conditional R.V,
\begin{align*}
\begin{array}{ll}
Z_k^{l}=\dfrac{\|\left({\bf I}_n-{\bf P}(\mathcal{S}^{k-1}_{row-est}\cup \mathcal{I}_l)\right){\bf W}\|_F^2}{\|{\bf R}^{k-1}\|_F^2} \\  
\text{conditioned on } \{\mathcal{S}^{k-1}_{row-est}=\mathcal{J}_{k-1},k_{min}=j\},
\end{array}
\end{align*}  
$\text{for} \ l \ \in \mathcal{L}_{k-1}$. From (\ref{eq:beta}) in Appendix A, we have
\begin{equation}
Z_k^{l} \sim \mathbb{B}\left(\frac{(n-kl_b)L}{2},\frac{l_bL}{2}\right), 
\end{equation}
$\forall \ l \in \mathcal{L}_{k-1}$. Since the index selected in the $k-1^{th}$ iteration belongs to  $\mathcal{L}_{k-1}$, it follows that conditioned on $\{\mathcal{S}^{k-1}_{row-est}=\mathcal{J}_{k-1},k_{min}=j\}$, $\underset{l\in \mathcal{L}_{k-1}}{\min}\sqrt{Z_k^l}\leq RR(k)$. Recall that $\Gamma_{grrt}^{\alpha}(k)=\sqrt{F_{\frac{(n-kl_b)L}{2},\frac{l_bL}{2}}^{-1}\left(\frac{\alpha}{k_{max}pos(k)}\right)}$.  It then follows from $\underset{l\in \mathcal{L}_{k-1}}{\min}\sqrt{Z_k^l}\leq RR(k)$ that 
\begin{equation}\label{firstbound}
\begin{array}{ll}
\mathbb{P}\left(RR(k)<\Gamma_{grrt}^{\alpha}(k)|\{\mathcal{S}^{k-1}_{row-est}=\mathcal{J}_{k-1},k_{min}=j\}\right)\\
\overset{(a)}{\leq} \mathbb{P}\left(\underset{l\in \mathcal{L}_{k-1}}{\min}\sqrt{Z_k^l}<\Gamma_{grrt}^{\alpha}(k)\right) \\
 \overset{(b)}{\leq} \sum\limits_{l \in \mathcal{L}_{k-1}}\mathbb{P}\left({Z_k^l}<(\Gamma_{grrt}^{\alpha}(k))^2\right)\\
\overset{(c)}{\leq}\sum\limits_{l \in \mathcal{L}_{k-1}} \frac{\alpha}{k_{max}pos(k)} \overset{(d)}{\leq}  \dfrac{\alpha}{k_{max}}
\end{array}
\end{equation}
(a) in (\ref{firstbound}) follows from $\underset{l\in \mathcal{L}_{k-1}}{\min}\sqrt{Z_k^l}\leq RR(k)$ and (b) follows from the union bound. By the definition of $\Gamma_{grrt}^{\alpha}(k)$, $\mathbb{P}({Z_k^l}<\left(\Gamma_{grrt}^{\alpha}(k)\right)^2)=F_{\frac{(n-kl_b)L}{2},\frac{l_bL}{2}}\left(F_{\frac{(n-kl_b)L}{2},\frac{l_bL}{2}}^{-1}\left(\frac{\alpha}{k_{max}pos(k)}\right)\right)=\dfrac{\alpha}{k_{max}pos(k)}$. (c) follows from this. (d) follows from $card(\mathcal{L}_{k-1})\leq pos(k)$. Eliminating the random set $\mathcal{J}_{k-1}\in \mathcal{K}^{k-1}$ from (\ref{firstbound}) using the law of total probability gives (\ref{secondbound}) for all $k>k_{min}=j$.
\begin{equation}\label{secondbound}
\begin{array}{ll}
\mathbb{P}(RR(k)<\Gamma_{grrt}^{\alpha}(k)|k_{min}=j)\\
=\sum\limits_{\mathcal{J}_k\in \mathcal{K}^{k-1}} \mathbb{P}(RR(k)<\Gamma_{grrt}^{\alpha}(k)|\{\mathcal{S}^{k-1}_{row-est}=\mathcal{J}_{k-1},k_{min}=j\}) \\
\ \ \ \ \ \ \ \ \  \times \mathbb{P}(\{\mathcal{S}^{k-1}_{row-est}=\mathcal{J}_{k-1}|k_{min}=j\}) \\
\leq \sum\limits_{\mathcal{J}_k \in \mathcal{K}^{k-1}}\dfrac{\alpha}{k_{max}} \mathbb{P}(\mathcal{S}^{k-1}_{row-est}=\mathcal{J}_{k-1}|k_{min}=j)
=\dfrac{\alpha}{k_{max}}.
\end{array}
\end{equation}
\squeezeup
Applying union bound  to (\ref{secondbound}) gives
\begin{equation}\label{thirdbound}
\begin{array}{ll}
\mathbb{P}(RR(k)>\Gamma_{grrt}^{\alpha}(k),\forall k>k_{min}|k_{min}=j)\\
\geq 1-\sum\limits_{k=j+1}^{k_{max}}\mathbb{P}(RR(k)<\Gamma_{grrt}^{\alpha}(k)|k_{min}=j)\\
\geq 1-\alpha \dfrac{k_{max}-j}{k_{max}} \geq 1-\alpha.
\end{array}
\end{equation}
{\bf Case 2:-}  {\bf Conditioning on $ k_{min}=\infty$ and $k_{min}=k_{max}$}. In both these cases, the set $\{k:k_{block}\leq k\leq k_{max}\}\cap \{k:k>k_{min}\}$ is empty. Since the minimum value of an empty set is $\infty$ by convention, one has for $j \in \{k_{max},\infty\}$
\begin{equation}\label{fourthbound}
\begin{array}{ll}
\mathbb{P}(RR(k)>\Gamma_{grrt}^{\alpha}(k),\forall k>k_{min}|k_{min}=j)\\
\geq \mathbb{P}(\underset{k>j}{\min}RR(k)>\Gamma_{grrt}^{\alpha}(k),\forall k>k_{min}|k_{min}=j)\\
=1 \geq 1-\alpha.
\end{array}
\end{equation}
Applying law of total probability to remove the conditioning on $k_{min}$ and  bounds (\ref{thirdbound}) and (\ref{fourthbound}) give
\begin{align*}\label{finalbound}
\begin{array}{ll}
\mathbb{P}(RR(k)>\Gamma_{grrt}^{\alpha}(k),\forall k>k_{min})\\=\sum\limits_{j \in \{k_{block},\dotsc,k_{max},\infty\}}\mathbb{P}(RR(k)>\Gamma_{grrt}^{\alpha}(k),\forall k>k_{min}|k_{min}=j)\\
\ \ \ \ \ \ \ \ \ \ \ \   \ \ \ \ \ \ \ \  \times \mathbb{P}(k_{min}=j) \\
\geq \sum\limits_{j \in \{k_{block},\dotsc,k_{max},\infty\}}(1-\alpha)\mathbb{P}(k_{min}=j)=1-\alpha.
\end{array}
\end{align*}
This proves Theorem \ref{thm:RRT_OMP}.
\end{proof} 

\section*{{\bf Appendix C: Proof of Theorem \ref{thm:RRT_guarantee}}}
\begin{proof}
For both SOMP and BOMP, $\mathcal{S}_{row-grrt}$ will be equal to $\mathcal{S}_{row}$ if three events $\mathcal{A}_1:\{\mathcal{S}_{row-est}^{k_{min}}=\mathcal{S}_{row}\}=\{k_{min}=k_{block}\}$, $\mathcal{A}_2:\{RR(k_{block})<\Gamma_{grrt}^{\alpha}(k_{block})\}$ and $\mathcal{A}_3:\{RR(k)>\Gamma_{grrt}^{\alpha}(k),\forall k> k_{min}\}$  occur simultaneously. $\mathcal{A}_1$ ensures that $\mathcal{S}_{row}$ is present in the sequence $\{\mathcal{S}_{row-est}^k\}_{k=1}^{k_{max}}$ and it is indexed by $k=k_{min}=k_{block}$. $\mathcal{A}_2$ ensures that $k_{grrt}=\max\{k:RR(k)<\Gamma_{grrt}^{\alpha}(k)\}\geq k_{block}$, whereas, $\mathcal{A}_3$ ensures that $k_{grrt}\leq k_{block}$. Hence, $\mathcal{A}_1\cap \mathcal{A}_2\cap \mathcal{A}_3$ ensures that $k_{grrt}=k_{min}=k_{block}$ and $\mathcal{S}_{row-grrt}=\mathcal{S}_{row}$. Hence, $\mathbb{P}(\mathcal{S}_{row-grrt}=\mathcal{S}_{row})\geq \mathbb{P}(\mathcal{A}_1\cap\mathcal{A}_2\cap\mathcal{A}_3)$. 

We first prove the case of SOMP. Note that $k_{block}=k_{row}$, $l_b=1$ and $L>1$ for SOMP.  $\mathcal{A}_1$ is true once $\|{\bf W}\|_F\leq \epsilon_{somp}$.  The following analysis assumes $\|{\bf W}\|_F\leq \epsilon_{somp}$. Since $\mathcal{S}_{row-est}^{k_{min}}=\mathcal{S}_{row}$ and $k_{min}=k_{row}$, $\|{\bf R}^{k_{min}}\|_F=\|\left({\bf I}_n-{\bf P}(\mathcal{S}_{row-est}^{k_{min}})\right){\bf W}\|_F\leq \|{\bf W}\|_F$. Following the proof of Theorem 1 in \cite{li2019fundamental}, we have $\|{\bf R}^k\|_F\geq \sqrt{1-\delta_{k_{row}+1}}{\bf B}_{min}^{mmv}-\|{\bf W}\|_F$ for $k<k_{row}=k_{min}$. Hence, 
\begin{equation}\label{eq:somp_}
RR(k_{min})\leq \dfrac{\|{\bf W}\|_F}{\sqrt{1-\delta_{k_{row}+1}}{\bf B}_{min}^{mmv}-\|{\bf W}\|_F},
\end{equation} once $\|{\bf W}\|_F\leq \epsilon_{somp}$.{ From (\ref{eq:somp_}), $\mathcal{A}_2$ is satisfied, i.e.,  $ RR(k_{min})<\Gamma_{grrt}^{\alpha}(k_{row})$ once 
\begin{equation}\label{eq:somp}
 \dfrac{\|{\bf W}\|_F}{\sqrt{1-\delta_{k_{row}+1}}{\bf B}_{min}^{mmv}-\|{\bf W}\|_F}<\Gamma_{grrt}^{\alpha}(k_{row}).
\end{equation}
(\ref{eq:somp}) is true once $\|{\bf W}\|_F\leq \epsilon_{grrt-somp}$.}  This means that $\mathcal{A}_1\cap \mathcal{A}_2$ is true once $\|{\bf W}\|_F\leq \min(\epsilon_{somp},\epsilon_{grrt-somp})$. Since $\mathbb{P}(\|{\bf W}\|_F\leq \epsilon^{\sigma}_{n,L})\geq 1-1/(nL)$, it follows that $\mathbb{P}(\mathcal{A}_1\cap \mathcal{A}_2)\geq 1-1/(nL)$, once $\epsilon^{\sigma}_{n,L}\leq \min(\epsilon_{somp},\epsilon_{grrt-somp})$. Since $\mathbb{P}(\mathcal{A}_3)\geq 1-\alpha$ for all $\sigma^2>0$ by Theorem \ref{thm:RRT_OMP}, it follows that $\mathbb{P}(\mathcal{S}_{row-grrt}=\mathcal{S}_{row})\geq 1-1/(nL)-\alpha$ once $\epsilon^{\sigma}_{n,L}\leq \min(\epsilon_{somp},\epsilon_{grrt-somp})$. The proof of BOMP is similar to that of SOMP except that $L=1$, $l_b>1$, $k_{row}=l_bk_{block}$ and  using the lower bound  $\|{\bf R}^k\|_2\geq \sqrt{1-\delta_{k_{block}+1}^b}{\bf B}_{min}^{bsmv}-\|{\bf W}\|_F$ for $k<k_{block}$ from the proof of Theorem 1 in \cite{li2018new}. 

Next we prove statement 2 for SOMP.  Since $\underset{\sigma^2\rightarrow 0}{\lim}\mathbb{P}(\mathcal{A}_1\cap\mathcal{A}_2)\geq \underset{\sigma^2\rightarrow 0}{\lim}\mathbb{P}\left(\|{\bf W}\|_F\leq \min(\epsilon_{somp},\epsilon_{grrt-somp})\right)=1$ when ${\bf W}[i,j]\overset{i.i.d}{\sim}\mathcal{N}(0,\sigma^2)$ and $\mathbb{P}(\mathcal{A}_3)\geq 1-\alpha$ for all $\sigma^2>0$, 
\begin{align*}
\underset{\sigma^2\rightarrow 0}{\lim}\mathbb{P}(\mathcal{S}_{row-grrt}=\mathcal{S}_{row})&\geq \underset{\sigma^2\rightarrow 0}{\lim} \mathbb{P}(\mathcal{A}_1\cap\mathcal{A}_2\cap\mathcal{A}_3)
 \geq 1-\alpha
\end{align*}
which proves statement 2. 
\end{proof} 
\section*{{\bf Appendix D: Proof of Lemma \ref{lemma:agg_sequence}}}
\begin{proof}
Suppose that the condition $\min\{k:\ j \ \in \ \mathcal{S}_{row}^C \ \& \ j \in \mathcal{S}_{row-est}^k\}>\min\{k:\bigcup\limits_{j=1}^k\mathcal{S}_{row-est}^k\supseteq\mathcal{S}_{row}\}$ is satisfied. Then, the first $k_{row}$ entries in $\mathcal{S}^{dup}$ in TABLE \ref{tab:Support_aggregation} are the $k_{row}$ entries in $\mathcal{S}_{row}$. This automatically ensures that $\mathcal{S}_{row-agg}^{k_{row}}=\mathcal{S}_{row}$. Next we establish the necessity of this condition using an example. Consider $\mathcal{S}_{row}=\{1,2\}$ (i.e., $k_{row}=2$) and a support estimate sequence $\mathcal{S}_{row-est}^1=\{1\}$, $\mathcal{S}_{row-est}^2=\{1,3\}$, $\mathcal{S}_{row-est}^3=\{1\}$, $\mathcal{S}_{row-est}^4=\{1,2\}$. Here,  $\min\{k:\ j \ \notin \ \mathcal{S}_{row} \ \& \ j \in \mathcal{S}_{row-est}^k\}=2$ and $\min\{k:\bigcup\limits_{j=1}^k\mathcal{S}_{row-est}^k\supseteq\mathcal{S}_{row}\}=4$, i.e., the condition in Lemma \ref{lemma:agg_sequence} is violated. Here, $\mathcal{S}_{union}=\{1,1,3,1,1,2\}$ and $\mathcal{S}^{dup}=\{1,3,2\}$. Thus the aggregated sequence is given by $\mathcal{S}_{row-agg}^1=\{1\}$, $\mathcal{S}_{row-agg}^2=\{1,3\}$ and $\mathcal{S}_{row-agg}^3=\{1,3,2\}$. Here $\mathcal{S}_{row-agg}^{k_{row}}\neq \mathcal{S}_{row}$. Hence proved.  
\end{proof}
\section*{ {\bf Appendix E: Proof of Theorem \ref{thm:arrt}}}
\begin{proof}
GRRT identifies the support $\mathcal{S}_{row}$ from $\{\mathcal{S}^k_{row-agg}\}_{k=1}^{k_{max}}$ if the following three events $\mathcal{A}_1$, $\mathcal{A}_2$ and $\mathcal{A}_3$ occur simultaneously. The events are $\mathcal{A}_1=\{\mathcal{S}_{row-agg}^{k_{row}}=\mathcal{S}_{row}\}=\{k_{min-agg}=k_{row}\}$, $\mathcal{A}_2=\{RR_{agg}(k_{row})\leq \Gamma_{grrt}^{\alpha}(k_{row})\}$ and $\mathcal{A}_3=\{RR_{agg}(k)> \Gamma_{grrt}^{\alpha}(k), \forall k\geq k_{row}\}$. Hence, $\mathbb{P}(\mathcal{S}_{row-grrt}=\mathcal{S}_{row})\geq \mathbb{P}(\mathcal{A}_1\cap \mathcal{A}_2\cap \mathcal{A}_3)$.  $\mathcal{A}_1$ ensures that true support $\mathcal{S}_{row}$ is present in the aggregated support sequence and  $\mathcal{A}_2\cap \mathcal{A}_3$ ensures that GRRT can identify this true support. From  Theorem \ref{thm:lasso} and Lemma \ref{lemma:agg_sequence}, $\mathcal{A}_1$ is satisfied once $\|{\bf W}\|_F\leq \epsilon_{lasso}$. From Theorem \ref{thm:lasso_monotonic}, we have 
 \begin{equation}\label{aaa0}
\mathbb{P}(\mathcal{A}_3)\geq 1-\alpha,\forall \sigma^2>0.
 \end{equation}
  We next consider $\mathcal{A}_2$ assuming that $\|{\bf W}\|_F\leq \epsilon_{lasso}$, i.e.,  $\mathcal{A}_1$ is true which implies that $k_{min-agg}=k_{row}$ and  $\mathcal{S}^{k}_{row-agg}\subseteq \mathcal{S}_{row},\ \forall k\leq k_{row}$.  Since $\mathcal{S}^{k_{row}}_{row-agg}=\mathcal{S}_{row}$, $\left({\bf I}_n-{\bf P}(\mathcal{S}_{row-agg}^k)\right){\bf X}{\bf B}={\bf O}_{n,1}$ and hence
 \begin{equation}
 \begin{array}{ll}
 \|{\bf R}_{agg}^{k_{row}}\|_F&=\|\left({\bf I}_n-{\bf P}(\mathcal{S}_{row-agg}^{k_{row}})\right){\bf Y}\|_F\\
 &=\|\left({\bf I}_n-{\bf P}(\mathcal{S}_{row-agg}^{k_{row}})\right){\bf W}\|_F\leq \|{\bf W}\|_F.
 \end{array}
 \end{equation}
 Applying triangle inequality to $\|{\bf R}_{agg}^{k_{row}-1}\|_2=\|\left({\bf I}_n-{\bf P}(\mathcal{S}_{row-agg}^{k_{row}-1})\right){\bf Y}\|_F$ along  with $\|\left({\bf I}_n-{\bf P}(\mathcal{S}_{row-agg}^{k_{row}-1})\right){\bf W}\|_F\leq \|{\bf W}\|_F$ gives
 \begin{equation}\label{a00}
\|{\bf R}_{agg}^{k_{row}-1}\|_F\geq \|({\bf I}_n-{\bf P}_{k_{row}-1}^{agg}){\bf X}{\bf B}\|_F- \|{\bf W}\|_F.
 \end{equation} 
Since $\mathcal{S}^{k_{row}-1}_{row-agg}\subset \mathcal{S}_{row}$,  it follows from Lemma 5 of \cite{cai2011orthogonal} that
$\|({\bf I}_n-{\bf P}(\mathcal{S}_{row-agg}^{k_{row}-1})){\bf X}{\bf B}\|_F$
\begin{equation}\label{a11}
\begin{array}{ll}
=\|({\bf I}_n-{\bf P}(\mathcal{S}_{row-agg}^{k_{row}-1})){\bf X}[:,{\mathcal{S}_{row}/\mathcal{S}^{k_{row}-1}_{row-agg}}]{\bf B}[{\mathcal{S}_{row}/\mathcal{S}^{k_{row}-1}_{row-agg}}]\|_F\\
\geq C_{min}\|{\bf B}[{\mathcal{S}_{row}/\mathcal{S}^{k_{row}-1}_{row-agg}}]\|_F 
\end{array}
\end{equation}
$k_{min-agg}=k_{row}$ implies that $\mathcal{S}_{row-agg}^{k}\subset \mathcal{S}_{row}$ and $card(\mathcal{S}_{row-agg}^{k})=k$ for $k<k_{row}$. Hence, $card({\mathcal{S}_{row}/\mathcal{S}^{k_{row}-1}_{row-agg}})=1$. Hence,   $\|{\bf B}[{\mathcal{S}_{row}/\mathcal{S}^{k_{row}-1}_{row-agg}}]\|_F\geq {\bf B}_{min}^{smv}$. Substituting these results in $RR_{agg}(k_{row})$ gives
 \begin{equation}
RR_{agg}(k_{row})=\dfrac{\|{\bf R}_{agg}^{k_{row}}\|_F}{\|{\bf R}_{agg}^{k_{row}-1}\|_F}\leq  \dfrac{\|{\bf W}\|_F}{C_{min}{\bf B}_{min}^{smv}-\|{\bf W}\|_F}. 
\end{equation}
Hence, $\mathcal{A}_2$ and $\mathcal{A}_1\cap \mathcal{A}_2$ are true once $\|{\bf W}\|_F\leq \min(\epsilon_{grrt-lasso},\epsilon_{lasso})$. Thus $\epsilon^{\sigma}_{n,1}\leq \min(\epsilon_{grrt-lasso},\epsilon_{lasso})$ implies that 
\begin{equation}\label{aaa1}
\mathbb{P}(\mathcal{A}_1\cap\mathcal{A}_2)\geq \mathbb{P}(\|{\bf W}\|_F\leq \min(\epsilon_{grrt-lasso}, \epsilon_{lasso}))\geq 1-1/n
\end{equation}
  Combining (\ref{aaa0}) and (\ref{aaa1}) gives $\mathbb{P}(\mathcal{A}_1\cap \mathcal{A}_2\cap \mathcal{A}_3)\geq 1-1/n-\alpha$ once $\epsilon^{\sigma}_{n,1}\leq \min(\epsilon_{lasso},\epsilon_{grrt-lasso})$. This proves statement 1. Statement 2 follows from the fact that $\underset{\sigma^2\rightarrow 0}{\lim}\mathbb{P}(\mathcal{A}_1\cap \mathcal{A}_2)=1$ and $\mathbb{P}(\mathcal{A}_3)\geq 1-\alpha$ for all $\sigma^2>0$. Consequently, $\underset{\sigma^2\rightarrow 0}{\lim}\mathbb{P}(\mathcal{A}_1\cap \mathcal{A}_2\cap \mathcal{A}_3)\geq 1-\alpha$ which proves statement 2. 
\end{proof}

\bibliography{compressive.bib}
\bibliographystyle{IEEEtran}

\end{document}